\documentclass[10pt]{article} 

\usepackage{etoolbox}
\newcommand{\arxiv}[1]{\iftoggle{iclr}{}{#1}}
\newcommand{\iclr}[1]{\iftoggle{iclr}{#1}{}}
\newtoggle{iclr}
\global\toggletrue{iclr}
\global\togglefalse{iclr}
\usepackage{makecell}
\iclr{
\PassOptionsToPackage{dvipsnames}{xcolor}
\usepackage{icml2026}
}

\usepackage[utf8]{inputenc} 
\usepackage[T1]{fontenc}    
\usepackage{url}            
\usepackage{booktabs}       
\usepackage{amsfonts}       
\usepackage{nicefrac}       
\usepackage{microtype}      
\usepackage{bm}
\usepackage{fontawesome5}
\usepackage{wrapfig}
\usepackage{mdframed}
\usepackage{subcaption}
\usepackage{multirow}

\newcommand{\bigO}{\mathcal{O}}

\usepackage{enumitem}
\iclr{\setlist[itemize]{noitemsep, topsep=0pt}}
\iclr{\setlist[enumerate]{noitemsep, topsep=0pt}}
\usepackage{breakcites}

\newtoggle{draft}
\togglefalse{draft}

\usepackage{mathrsfs}

\usepackage{verbatim}

\usepackage{multicol}
\usepackage{siunitx}
\usepackage{colortbl}
\usepackage{xcolor}
\definecolor{hl}{RGB}{245,245,245}
\newcolumntype{H}{>{\columncolor{hl}}} 

\definecolor{sd}{RGB}{232, 233, 251}
\definecolor{fm}{RGB}{252, 232, 216}

\usepackage{setspace}

\usepackage{transparent}

\usepackage{inconsolata}
\usepackage[scaled=.90]{helvet}
\usepackage{xspace}
\usepackage[most]{tcolorbox}
\tcbuselibrary{breakable}

\usepackage{pifont}
\newcommand{\xmark}{\ding{55}}%

\RequirePackage{algorithm}
\RequirePackage{algorithmic}

\arxiv{
\usepackage[a4paper, margin=2.5cm]{geometry}
\PassOptionsToPackage{hypertexnames=false}{hyperref}  
\usepackage[dvipsnames]{xcolor}
\usepackage[colorlinks]{hyperref}
\hypersetup{
    citecolor=[RGB]{50,100,170},
    linkcolor=[RGB]{50,100,170},
    urlcolor=[RGB]{255,102,178}}
\usepackage{fancyhdr}
}

\arxiv{
\newcommand{\toptitlebar}{%
  {\color{black}\hrule height 1pt}%
  \vskip 0.25in%
}
}

\makeatletter
\arxiv{
\renewcommand{\maketitle}{%
  \begin{center}%
    \toptitlebar
    \vskip 0.1in%
    {\LARGE\bfseries \@title \par}%
    \vskip 0.3in%
    {\normalsize \@author \par}%
  \end{center}%
  \par
  \vskip 0.3in%
}

\renewcommand\section{\@startsection {section}{1}{\z@}{-2.0ex plus
    -0.5ex minus -.2ex}{1.5ex plus 0.3ex minus .2ex}{\large\bfseries\raggedright}}
\renewcommand\subsection{\@startsection{subsection}{2}{\z@}{-1.8ex plus
    -0.5ex minus -.2ex}{0.8ex plus .2ex}{\normalsize\bfseries\raggedright}}
\renewcommand\subsubsection{\@startsection{subsubsection}{3}{\z@}{-1.5ex plus
   -0.5ex minus -.2ex}{0.5ex plus .2ex}{\normalsize\bfseries\raggedright}}

\renewenvironment{abstract}%
  {\centerline{\large\bfseries Abstract}%
   \begin{list}{}%
      {\setlength{\rightmargin}{0.6cm}%
       \setlength{\leftmargin}{0.6cm}}%
    \item[]\ignorespaces}%
  {\unskip\end{list}}

\setlength{\parindent}{2em}
\setlength{\parskip}{0.4em}
}
\makeatother

\iclr{
\usepackage[colorlinks=true, linkcolor=blue!70!black, citecolor=blue!70!black,urlcolor=blue!70!black,breaklinks=true]{hyperref}
\PassOptionsToPackage{dvipsnames}{xcolor} 
}

\usepackage{microtype}
\usepackage{hhline}

\makeatletter
\newcommand{\neutralize}[1]{\expandafter\let\csname c@#1\endcsname\count@}
\makeatother

\usepackage{algorithm}

\arxiv{
\usepackage{natbib}
\bibliographystyle{plainnat}
\bibpunct{(}{)}{;}{a}{,}{,}
}

\usepackage{amsthm}
\usepackage{mathtools}
\usepackage{amsmath}
\usepackage{bbm}
\usepackage{amsfonts}
\usepackage{amssymb}

\usepackage{xpatch}

\usepackage{thmtools}
\usepackage{thm-restate}
\declaretheorem[name=Theorem]{theorem}
\declaretheorem[name=Lemma,sibling=theorem]{lemma}
\declaretheorem[name=Assumption,sibling=theorem]{assumption}
\declaretheorem[name=Condition,sibling=theorem]{condition}

\declaretheorem[name=Proposition,sibling=theorem]{proposition}

\makeatletter
  \renewenvironment{proof}[1][Proof]%
  {%
   \par\noindent{\bfseries\upshape {#1.}\ }%
  }%
  {\qed\newline}
  \makeatother

\theoremstyle{definition}  

\newtheorem{corollary}{Corollary}[section]

\theoremstyle{plain}
\newtheorem{definition}{Definition}[section]

\xpatchcmd{\proof}{\itshape}{\normalfont\proofnameformat}{}{}
\newcommand{\proofnameformat}{\bfseries}

\usepackage[nameinlink,capitalize]{cleveref}

\renewcommand{\eqref}[1]{\texorpdfstring{\hyperref[#1]{(\ref*{#1})}}{(\ref*{#1})}}

\crefformat{equation}{#2Eq. (#1)#3}
\Crefformat{equation}{#2Eq. (#1)#3}

\Crefformat{figure}{#2Figure #1#3}
\Crefname{assumption}{Assumption}{Assumptions}
\Crefformat{assumption}{#2Assumption #1#3}
\Crefname{subsubsection}{Section}{Sections}
\crefformat{subsubsection}{#2Section #1#3}
\Crefformat{subsubsection}{#2Section #1#3}
\Crefname{alg}{Alg.}{Algs.}

\usepackage{crossreftools}
\usepackage{xparse}

\ExplSyntaxOn
\DeclareDocumentCommand{\XDeclarePairedDelimiter}{mm}
 {
  \__egreg_delimiter_clear_keys: 
  \keys_set:nn { egreg/delimiters } { #2 }
  \use:x 
   {
    \exp_not:n {\NewDocumentCommand{#1}{sO{}m} }
     {
      \exp_not:n { \IfBooleanTF{##1} }
       {
        \exp_not:N \egreg_paired_delimiter_expand:nnnn
         { \exp_not:V \l_egreg_delimiter_left_tl }
         { \exp_not:V \l_egreg_delimiter_right_tl }
         { \exp_not:n { ##3 } }
         { \exp_not:V \l_egreg_delimiter_subscript_tl }
       }
       {
        \exp_not:N \egreg_paired_delimiter_fixed:nnnnn 
         { \exp_not:n { ##2 } }
         { \exp_not:V \l_egreg_delimiter_left_tl }
         { \exp_not:V \l_egreg_delimiter_right_tl }
         { \exp_not:n { ##3 } }
         { \exp_not:V \l_egreg_delimiter_subscript_tl }
       }
     }
   }
 }

\keys_define:nn { egreg/delimiters }
 {
  left      .tl_set:N = \l_egreg_delimiter_left_tl,
  right     .tl_set:N = \l_egreg_delimiter_right_tl,
  subscript .tl_set:N = \l_egreg_delimiter_subscript_tl,
 }

\cs_new_protected:Npn \__egreg_delimiter_clear_keys:
 {
  \keys_set:nn { egreg/delimiters } { left=.,right=.,subscript={} }
 }

\cs_new_protected:Npn \egreg_paired_delimiter_expand:nnnn #1 #2 #3 #4
 {
  \mathopen{}
  \mathclose\c_group_begin_token
   \left#1
   #3
   \group_insert_after:N \c_group_end_token
   \right#2
   \tl_if_empty:nF {#4} { \c_math_subscript_token {#4} }
 }
\cs_new_protected:Npn \egreg_paired_delimiter_fixed:nnnnn #1 #2 #3 #4 #5
 {
  \mathopen{#1#2}#4\mathclose{#1#3}
  \tl_if_empty:nF {#5} { \c_math_subscript_token {#5} }
 }
\ExplSyntaxOff

\XDeclarePairedDelimiter{\supnorm}{
  left=\lVert,
  right=\rVert,
  subscript=\infty
  }

\newcommand{\ours}[1]{\textsc{MaxRL}}

\def\ddefloop#1{\ifx\ddefloop#1\else\ddef{#1}\expandafter\ddefloop\fi}
\def\ddef#1{\expandafter\def\csname bb#1\endcsname{\ensuremath{\mathbb{#1}}}}
\ddefloop ABCDEFGHIJKLMNOPQRSTUVWXYZ\ddefloop
\def\ddefloop#1{\ifx\ddefloop#1\else\ddef{#1}\expandafter\ddefloop\fi}
\def\ddef#1{\expandafter\def\csname b#1\endcsname{\ensuremath{\mathbf{#1}}}}
\ddefloop ABCDEFGHIJKLMNOPQRSTUVWXYZ\ddefloop
\def\ddef#1{\expandafter\def\csname sf#1\endcsname{\ensuremath{\mathsf{#1}}}}
\ddefloop ABCDEFGHIJKLMNOPQRSTUVWXYZ\ddefloop
\def\ddef#1{\expandafter\def\csname c#1\endcsname{\ensuremath{\mathcal{#1}}}}
\ddefloop ABCDEFGHIJKLMNOPQRSTUVWXYZ\ddefloop
\def\ddef#1{\expandafter\def\csname h#1\endcsname{\ensuremath{\widehat{#1}}}}
\ddefloop ABCDEFGHIJKLMNOPQRSTUVWXYZ\ddefloop
\def\ddef#1{\expandafter\def\csname hc#1\endcsname{\ensuremath{\widehat{\mathcal{#1}}}}}
\ddefloop ABCDEFGHIJKLMNOPQRSTUVWXYZ\ddefloop
\def\ddef#1{\expandafter\def\csname t#1\endcsname{\ensuremath{\widetilde{#1}}}}
\ddefloop ABCDEFGHIJKLMNOPQRSTUVWXYZ\ddefloop
\def\ddef#1{\expandafter\def\csname tc#1\endcsname{\ensuremath{\widetilde{\mathcal{#1}}}}}
\ddefloop ABCDEFGHIJKLMNOPQRSTUVWXYZ\ddefloop
\def\ddefloop#1{\ifx\ddefloop#1\else\ddef{#1}\expandafter\ddefloop\fi}
\def\ddef#1{\expandafter\def\csname scr#1\endcsname{\ensuremath{\mathscr{#1}}}}
\ddefloop ABCDEFGHIJKLMNOPQRSTUVWXYZ\ddefloop

\iclr{%
\hypersetup{%
  colorlinks=true,%
  citecolor=[RGB]{50,100,170},%
  linkcolor=[RGB]{50,100,170},%
  urlcolor=[RGB]{255,102,178}%
}%
}

\iclr{
\usepackage{hyperref}
\newcommand{\alghyperref}[1]{\hyperref[#1]{Alg.~\ref*{#1}}}
}

\usepackage{color-edits}
\addauthor{ys}{MidnightBlue}

\arxiv{
\usepackage[final]{showlabels}

}

\usepackage{caption}


\usepackage{accents}
\usepackage{wrapfig}
\usepackage{tikz}
\usetikzlibrary{decorations.pathreplacing}
\usepackage{varwidth}

 \addtocontents{toc}{\protect\setcounter{tocdepth}{0}}

\let\oldparagraph\paragraph
\arxiv{\renewcommand{\paragraph}[1]{\oldparagraph{#1.}}}
\iclr{\renewcommand{\paragraph}[1]{\textbf{#1.}}}

\usepackage{yfonts}

\arxiv{
    \title{Message Passing Enables Efficient Reasoning}
    \author{
      \textbf{Xuecheng Liu}$^{*}$ \quad
      \textbf{Daman Arora}$^{*}$ \quad
      \textbf{Gokul Swamy} \quad
      \textbf{Andrea Zanette} \\
      \vspace{1mm}
      Carnegie Mellon University\\
    }
}

\begin{document}

\arxiv{
\maketitle
\renewcommand{\thefootnote}{}
\footnotetext{$^*$Equal contribution.
Project webpage: \href{https://zanette-labs.github.io/mplm.github.io/}{\faGlobe}\quad
Code: \href{https://github.com/Zanette-Labs/mplm}{\faGithub}
}
\renewcommand{\thefootnote}{\arabic{footnote}}
\thispagestyle{fancy}
\fancyhead{}
\lhead{\raisebox{-0.7cm}{\includegraphics[height=0.52cm]{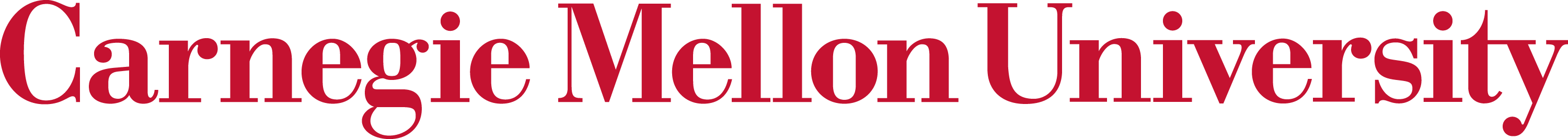}}}
\renewcommand{\headrulewidth}{0pt}
\setlength{\headheight}{18pt}
\setlength{\headsep}{3mm}
}

\newcommand{\guanning}[1]{\textcolor{cyan}{[Guanning: #1]}}


\begin{abstract}

  While inference-time scaling has improved the reasoning abilities of large language models (LLMs), the need to generate long chains-of-thought (CoTs) is a computational bottleneck. Thus, in contrast to sequential scaling methods like CoT, recent parallel scaling techniques instead use fork and join (FJ) primitives to divide work across multiple \textit{LLM threads}. However, in the fork-join paradigm, threads are typically transient and do not communicate pointwise with one another which limits scalability. To tackle this, we introduce Message Passing Language Models (MPLMs), a framework for LLM reasoning in which threads communicate directly via lightweight send and receive primitives. MPLMs enable efficient scaling through two key mechanisms: (1) reduced communication costs, achieved by avoiding redundant context sharing, and (2) preemption, which allows threads to terminate early based on partial information from their peers. 
  
  We demonstrate the promise of MPLMs on 3 classes of tasks. First, on Sudoku puzzles, we show that MPLMs require an asymptotically smaller context than both serial CoT and parallel FJ. We then fine-tune a single model to solve 25 $\times$ 25 puzzles that remain challenging for standard CoT and FJ approaches, as well as frontier reasoning models without tools. Second, on 3-SAT puzzles, the capability of preemption allows termination of unpromising branches, which results in improved efficiency. Finally, we show that appropriately prompted large pre-trained models follow the MPLM protocol, achieving competitive results on long-context question answering relative to popular fork-join approaches.

\end{abstract}

\section{Introduction}

\begin{figure*}[ht!]
    \centering
    \includegraphics[width=0.9\textwidth]{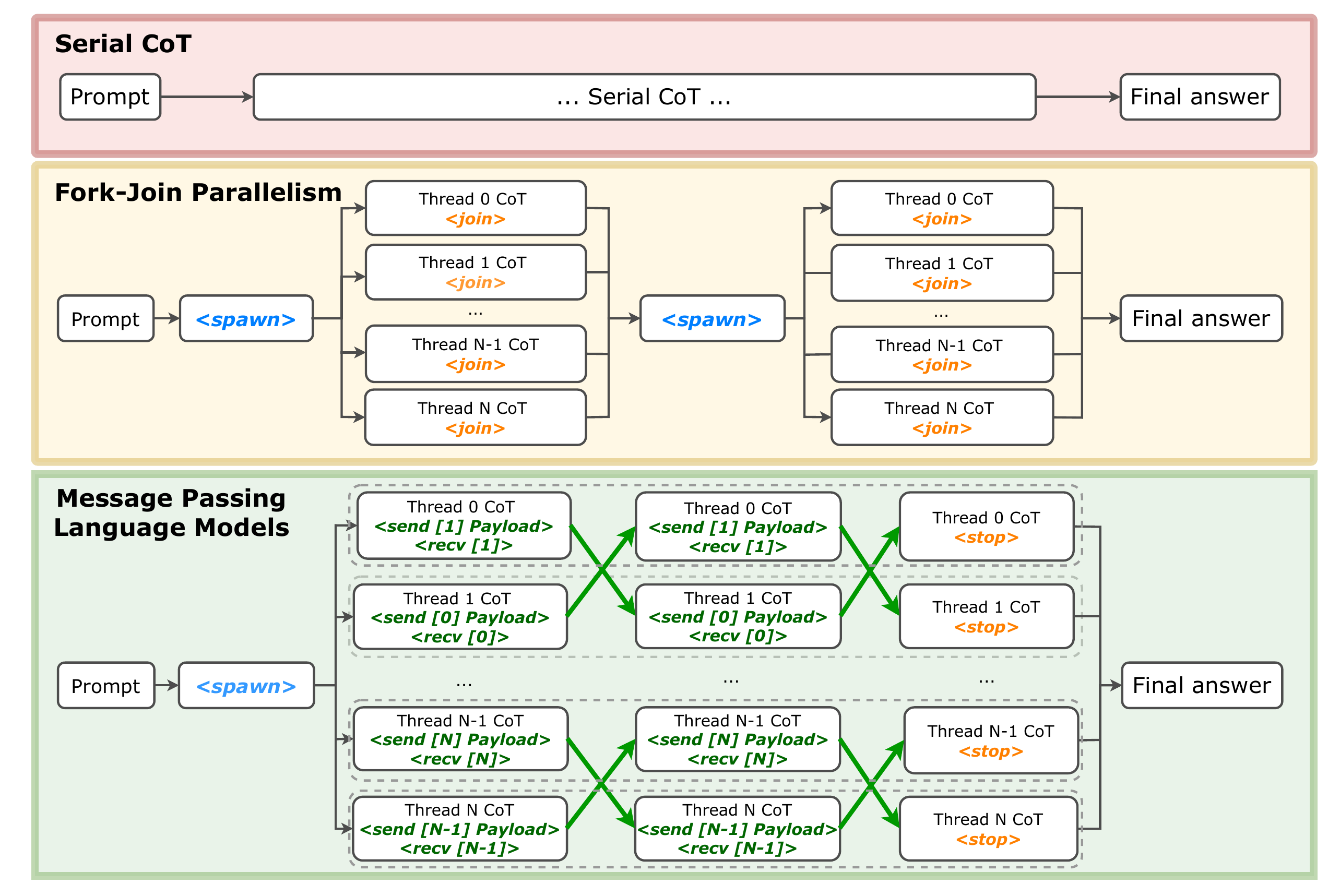}
\caption{Comparison of reasoning paradigms for scaling test-time compute.
\textbf{Top:} Serial Chain-of-Thought (CoT), where a single model generates a monolithic reasoning trace sequentially from prompt to answer.
\textbf{Middle:} Fork-Join parallelism, where multiple independent transient CoT threads are spawned in parallel and periodically synchronized via global join operations before continuing.
\textbf{Bottom:} Message Passing Language Models (MPLMs), where concurrently decoded threads communicate directly through explicit \texttt{send} and \texttt{recv} primitives and maintain a persistent context, enabling fine-grained coordination and information sharing throughout inference. }
    \label{fig:main_figure}
    \vspace{-0.5em}
\end{figure*}

\emph{Large reasoning models} \citep{Guo_2025, openai2024openaio1card} enhance the reasoning capabilities of foundation models by allocating extra compute at test time. This compute is devoted to a \emph{chain of thought} (CoT)---an intermediate sequence of reasoning steps. The dominant reinforcement-learning-based training paradigm used to train reasoning models \citep{Guo_2025} yields \emph{monolithic} CoTs, where the model explores alternative strategies, backtracks on inconsistencies, verifies intermediate results, and aggregates partial solutions into a final answer. However, this process is inherently \emph{serial}, as tokens must be generated autoregressively, creating a computational bottleneck. Moreover, transformer-based architectures \citep{Vaswani2017AttentionIA} impose practical limits: positional embeddings \citep{su2023roformerenhancedtransformerrotary} constrain usable context and the quadratic cost of self-attention reduces returns on additional compute. As we imagine an exciting future where reasoning models can tackle complex problems, such as engineering a new product, advancing scientific research, or even running a company, autoregressively generated CoTs may ultimately prove to be a limited approach.

Not all reasoning needs to unfold as a single sequential chain. For example, when designing a complex software system, work is often divided across teams: one team may focus on the database layer, another on the user interface, and another on the networking layer and they all work in parallel, coordinating when necessary. An initial attempt to this paradigm of scaling inference-time compute has been investigated in recent literature \citep{pan2025learning, zheng2025parallelr1parallelthinkingreinforcement} where an LLM can \textit{fork} its own execution and decode multiple threads in parallel, before the resulting computation of the threads is aggregated by the parent thread.

While this fork-join paradigm reduces latency and alleviates context pressure, it retains critical limitations: \emph{all coordination is centralized} and \emph{context for workers is transient}. Parallel workers remain isolated during execution, their contexts are temporary, and all information must flow through the parent thread, which introduces both communication bottlenecks and implicit serialization. Conceptually, this is akin to an organization in which every decision-no matter how local or routine-must be routed through a single CEO: such a structure is inherently brittle, limits parallelism, and defeats the purpose of delegation.

In response, we propose \textbf{Message Passing Language Models} (MPLMs), a framework that decomposes reasoning into persistent, semi-independent threads that coordinate via explicit point-to-point message passing. By distributing reasoning across multiple interacting workers that communicate through explicit message passing, MPLMs enable more efficient use of test-time compute and improved scalability as problem complexity grows. Unlike prior multi-agent or debate-based approaches, which rely on independently acting agents with pre-defined roles and fixed communication protocols, MPLMs formulate reasoning as a single, unified process that is decomposed into communicating sub-threads. Rather than optimizing separate agents for different objectives, a single model dynamically decides when, with whom, and what to communicate using its own learned reasoning capabilities, allowing intricate communication patterns and task-specific structures to emerge organically.



 We instantiate these ideas on two structured reasoning testbeds: Sudoku and 3-SAT puzzles. These problems are deterministic and naturally parallelizable which allows us to scale complexity incrementally in a way that cleanly isolates the benefits of our reasoning decomposition from confounding factors. Efficient Sudoku solutions exploits the benefits of sparse, local communication, while 3-SAT utilizes a complementary advantage of preemption in search.  We show more favorable scaling with complexity of the problem, more specifically the size of the grid in Sudoku and the number of variables in 3-SAT as compared to Serial or Fork-Join baselines. Particularly impressively, we are able to scale to 25 $\times$ 25 Sudoku grids that remain \textbf{out of reach even for frontier reasoning models} such as GPT-5 Pro. 

Finally, we demonstrate preliminary evidence of the ability of large pretrained models to use the MPLM framework when prompted, without the need for any training. On LongBench-v2 \citep{longbenchv2}, we show promising results highlighting improved efficiency as compared to RLMs \citep{zhang2026recursivelanguagemodels}, a contemporary fork-join approach.

In summary, our contributions are three-fold: 
\begin{enumerate}
\item We introduce MPLMs, a framework for message passing between parallel LLM threads and analyzing the key \textit{mechanisms} through which they unlock improved efficiency for language model inference.
\item We demonstrate \textit{empirical scalability} on Sudoku puzzles well beyond currently existing serial CoT and Fork-join parallelism and improved efficiency of SAT puzzles.
\item We provide evidence on the ability of large pretrained models to \textit{route information appropriately} using the MPLM framework on Long Context Question Answering.
\end{enumerate}

\section{Preliminaries}

\textbf{Language Model}\;
A language model (LM) computes a distribution over tokens $y$ in a vocabulary $\mathcal{V}$, given a prefix $x = (x_1, x_2, \cdots, x_N) \in \mathcal{V}^*$. Formally, an LM defines the conditional distribution $p(y | x)$ over the vocabulary $\mathcal{V}$. In the standard setting, generation in LMs is \textit{autoregressive}, that is, given a prefix $x$, the model generates tokens $y_1, y_2, \cdots, y_M$ based on a sampler over the conditional distribution over tokens.  Modern LLMs are based on the Transformer \citep{Vaswani2017AttentionIA} architecture, where each token attends to all previous tokens in the context window causing both training and inference costs to scale quadratically with the sequence length, making long-context generation expensive.

\noindent\textbf{Batched inference} \;
Batched inference refers to serving multiple generation requests in a single forward pass (or in tightly packed micro-batches) to improve device utilization and throughput. Frameworks such as vLLM \citep{kwon2023efficient} and SGLang \citep{zheng2024sglang} are highly optimized for batch inference and exemplify this design pattern by combining runtime scheduling with KV-cache-aware execution allowing high-throughput inference.

\noindent\textbf{MPI} \; The Message Passing Interface (MPI)  is a standardized framework for parallel and distributed computation based on explicit message passing between processes, exposing primitives such as send/receive, synchronization, and dynamic process management. Classic references such as \citet{10.5555/207387, 10.5555/552013} describe how persistent workers exchanging messages enables decentralized computation. Our work adapts these principles as an abstraction for distributed reasoning among LLM threads.

\section{Message Passing Language Models}
\label{sec:MPLM}

In traditional LLM inference, a model generates a \emph{single} stream of tokens autoregressively in response to a user's request. In contrast, MPLMs create multiple \emph{LLM threads} for handling sub-tasks, which are indexed by an ID and associated with distinct prompts and contexts.
These threads are decoded concurrently through batched inference
\citep{kwon2023efficient, zheng2024sglang}.

\subsection{Execution Control Directives}
The foundation of MPLMs is a set of \textit{directives}: explicit instructions that control execution. The model itself generates special tagged sub-strings that the framework interprets as directives. These play a role analogous to process creation, point-to-point communication, synchronization, and termination in message-passing systems. In MPLMs, we have 4 fundamental directives: \texttt{spawn}, \texttt{send}, \texttt{receive} and \texttt{stop}:

\begin{itemize}[leftmargin=*, itemsep=0pt, topsep=0pt]
\item \textbf{Spawn} The spawn directive requires the model to generate a string of the format:

{\centering\ttfamily <spawn[$id_1, id_2, \dots, id_N$]> prompt
</spawn>\par}

This generates $N$ new LLM threads (similar to forking) with the corresponding textual IDs and the `prompt' concatenated in their individual contexts.

\item \textbf{Send} The send directive is one of the two primitives responsible for handling message-passing. To generate a send, the model has to generate a string of the form:

{\centering\ttfamily <send[$id_1, id_2, \dots, id_N$]> message </send>\par}

This sends the {\ttfamily{message}} mentioned in the directive to LLM threads with IDs $id_1, id_2 ... id_N$.

\item \textbf{Receive} The receive directive is the second primitive responsible for handling message-passing. To generate a receive, the model has to generate a string like:

{\centering\ttfamily <recv[$id_1, id_2, \dots, id_N$]>\par}

The behavior of this primitive is: wait for messages from threads $id_1, id_2 ... id_N$.

\item \textbf{Stop} This directive is responsible for ending the execution of an LLM thread. A stop is triggered when a thread generates a string of the form \texttt{<stop>}. Whenever a stop is generated by a thread its execution is permanently stopped by the inference engine.
\end{itemize}

\subsection{Inference Logic}

Now, we describe how these directives are instantiated on a standard batched inference engine. Standard batched inference engines (e.g., vLLM) treat each decoding request independently. MPLMs instead require \emph{coordinated} parallel decoding: threads may spawn new threads, send messages, and block on receives. We implement this coordination via a lightweight \emph{controller} layered on top of the inference engine $\mathcal{E}$ while leaving the underlying token sampling and KV-cache management unchanged. This controller can be visualized as a mini-operating system working on the substrate which is the inference engine $\mathcal{E}$.

The controller repeatedly invokes $\mathcal{E}$ to decode all \textit{runnable} threads \emph{in parallel} until they reach either (a) a directive boundary, (b) EOS, or (c) a maximum step budget. A directive boundary occurs when the emitted token stream matches a directive format (\texttt{spawn}, \texttt{send}, \texttt{recv}, or \texttt{stop}). This is implemented by specifying special stop-tokens to $\mathcal{E}$. When a thread emits a \texttt{send} directive, the specified message is delivered to the target threads and decoding continues. On the other hand, upon emitting \texttt{recv}, the thread is blocked until it has received messages from all the relevant threads listed in the directive. For a more detailed pseudo-code reference, refer Section \ref{sec:pseudocode} of the Appendix. 

This framework also gives freedom in terms of certain design choices such as the implementation of the \texttt{recv} directive. In particular, we explore a wait-for-all and a wait-for-any implementation. Additionally, it also naturally allows a thread to \textit{re-spawn} itself with a compressed task summary which leads to higher efficiency. Finally, we discuss how the ability of \textit{preempting} another thread is natively supported by the MPLM framework. We defer a detailed discussion on these aspects to Appendix \ref{app:mplm_additional} for clarity of exposition.

\section{MPLMs Enable Efficient Reasoning }

Next, we formalize the intuitive notion that reduced aggregation costs leads to efficiency in MPLMs. In particular, we design a simplified setup below to theoretically show that MPLMs can be significantly more efficient than other paradigms such as Serial CoT and Fork-Join (FJ).

\noindent\textbf{Setup.}\; We consider a task that proceeds in $T$ iterations. In each iteration, $N$ workers (ranks) run in parallel, each solving a local sub-task and producing at most $W$ tokens. To advance to the next iteration, each worker requires information from at most $k$ workers from the previous iteration. This information is exchanged via messages of size at most $M$ tokens. Thus, rather than requiring access to all previous computations, each iteration depends only on a small, localized subset of the messages constructed in the previous iteration. We compare three paradigms to operationalize this particular task keeping in mind their natural constraints:

\begin{enumerate}[leftmargin=*, itemsep=0pt, topsep=0pt]
    \item \textbf{Serial CoT:} This is the standard way in modern reasoning models reason, that is, using a Serialized Chain-of-Thought \citep{Guo_2025, openai2024openaio1card}. Since there is no parallelization, all the work of individual ranks is done sequentially for each iteration.
    \item \textbf{Fork-Join (FJ):} In this paradigm \citep{pan2025learning, zheng2025parallelr1parallelthinkingreinforcement}, a model can spawn threads for various sub-tasks and then executes them in parallel. At the end of an iteration, a master rank collects all messages, constructs new per-rank prompts, and spawns fresh workers for the next iteration.
    \item \textbf{Message passing (MPLM):} Each rank maintains a persistent context throughout iterations and communicates point-to-point with its $k$ neighbors.
\end{enumerate}

\begin{assumption}[Bounded local processing]
\label{assumption:comm_dominated}
The per-iteration local reasoning grows linearly with the total message
volume received by the thread, i.e., $W = \mathcal{O}(kM)$. This sparsity condition is naturally satisfied in many structured reasoning tasks. For example, in code generation, modules typically depend on a small set of interfaces rather than the full codebase.
\end{assumption}

Now, we analyze the maximum context required for each of these methods since context length is the fundamental bottleneck in scaling inference compute. Denote the maximum context length required by the Serial, FJ and MPLM paradigms as $C_{\max}^{\mathrm{Serial}}, C_{\max}^{\mathrm{FJ}},$ and  $C_{\max}^{\mathrm{MPLM}}$.

\begin{theorem}[\textit{\textbf{Maximum context for Serial v. FJ v. MPLM}}]
\label{thm:max_context}
\textup{[Proof: Appendix~\ref{sec:extended_theory}]}
\[
C_{\max}^{\mathrm{Serial}} = \bigO{(TNkM)},\;\;\;\;
C_{\max}^{\mathrm{FJ}} = \bigO{(TNM)} \;\; \text{and} \;\;
C_{\max}^{\mathrm{MPLM}} = \bigO{(TkM)}.
\]
\end{theorem}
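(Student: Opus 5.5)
The plan is to bound, for each paradigm, the largest context any single thread accumulates over the $T$ iterations. In every case I write that context as a sum of (i) the tokens the thread itself generates and (ii) the message tokens it must read. Then I simplify with \Cref{assumption:comm_dominated}, which lets me replace every occurrence of $W$ by $\bigO(kM)$. Constant-size prompts and directive tags are absorbed into the $\bigO$ throughout.

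\textbf{Serial CoT.} A single thread does all $N$ workers' local reasoning in every iteration, so it generates at most $NW$ tokens per iteration. It also writes out, or equivalently reads, the $\le N$ messages of $\le M$ tokens each. Because nothing is ever discarded, after $T$ iterations the context is at most $T(NW + NM)$. Substituting $W=\bigO(kM)$ and using $k\ge 1$ gives $C_{\max}^{\mathrm{Serial}}=\bigO(TN(kM+M))=\bigO(TNkM)$.

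\textbf{Fork-Join.} Workers are transient, so each worker context is only a freshly built prompt of at most $kM$ tokens plus at most $W$ generated tokens, i.e.\ $\bigO(kM)$, which is dominated by what follows. The bottleneck is the persistent master thread. In each iteration it must join all $N$ returned messages, at most $NM$ tokens, and emit new per-rank prompts. I will model the prompts as references or rebroadcasts of the collected messages, again $\bigO(NM)$ per iteration. Over $T$ iterations the master's context is $\bigO(TNM)$. The maximum of the worker and master contexts is therefore $\bigO(TNM)$, assuming the natural regime $kM\le TNM$.

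\textbf{MPLM.} Each rank is persistent but only ever sees its own work and its neighbours' messages. Per iteration it generates $\le W$ tokens, sends $\le M$ tokens, and receives $\le kM$ tokens. Summing over $T$ iterations gives $T(W+M+kM)=\bigO(TkM)$ by the assumption. No controller thread accumulates anything, since routing is done by the framework outside any LLM context. I expect the main obstacle to be making the Fork-Join accounting rigorous: I must justify that the master's per-iteration cost is $\Theta(NM)$ rather than $\Theta(NkM)$. If the master literally copied $k$ messages into each of the $N$ prompts, the cost would be $NkM$. I will handle this by stipulating, consistent with the setup, that prompts are assembled from the joined messages without re-generating them token by token. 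The statement only claims upper bounds, so it suffices to exhibit this natural implementation.
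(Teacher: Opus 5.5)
Your proposal is correct and follows the paper's proof essentially step for step. You do per-thread context accounting summed over the $T$ iterations, giving $TN(W+M)$ for Serial, the FJ master's $\bigO(NM)$ per iteration, and $T(W+kM)$ per MPLM rank, then substitute $W=\bigO(kM)$ from Assumption~\ref{assumption:comm_dominated}. The paper's Fork-Join step charges the master only the $NM$ incoming message tokens per iteration and never addresses prompt construction, so your stipulation makes explicit something the paper leaves implicit. Two small points: $kM\le TNM$ follows from $k\le N$ and need not be assumed, and under your rebroadcast option the worker prompt is $\bigO(NM)$ rather than $\bigO(kM)$, which still lies within $\bigO(TNM)$.
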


We observe that Serial CoT incurs the highest context requirements due to serialization. The Fork-join execution improves this by parallelizing the reasoning per-rank, but still incurs a centralized aggregation cost that scales with $N$. In contrast, MPLMs exploit sparse communication yielding a maximum context requirement that is smaller by a factor of $\Theta(N/k)$ compared to FJ. This factor is especially significant when communication is sparse. This provides a theoretical justification for the \textbf{context efficiency} of MPLMs. 

\section{Experiments}
\begin{figure}[h!]
    \centering
    \begin{subfigure}{0.45\textwidth}
        \centering
        \includegraphics[width=\linewidth]{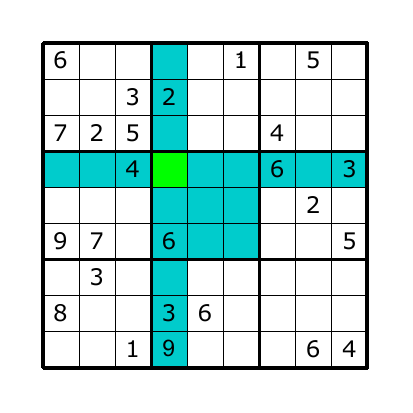}
    \end{subfigure}
    \hfill
    \begin{subfigure}{0.45\textwidth}
        \centering
        \includegraphics[width=\linewidth]{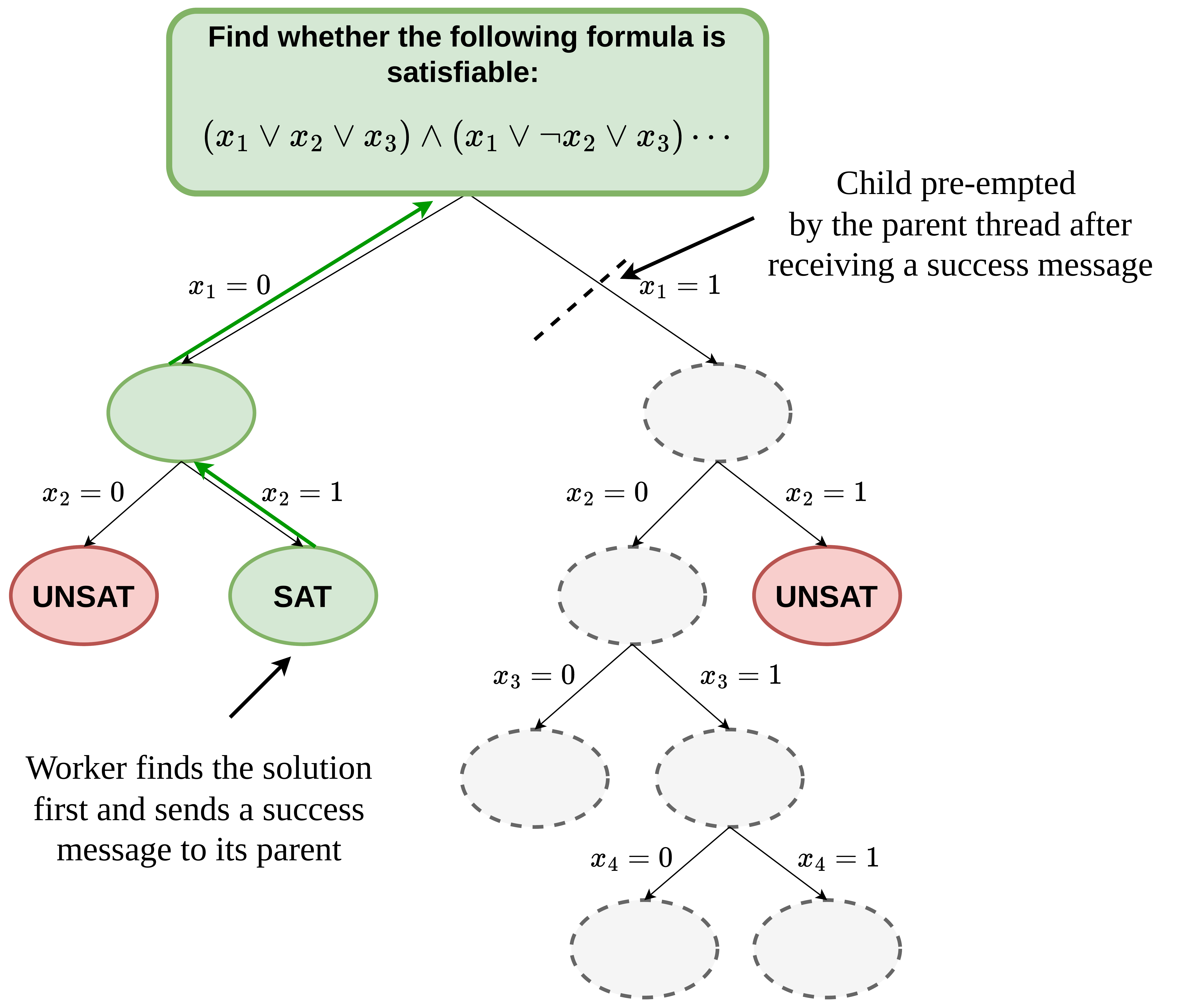}
    \end{subfigure}
        \caption{\textbf{Left:} This figure describes the communication pattern for a $9\times 9$ Sudoku grid. Consider the worker for the cell (4, 4) highlighted in green. This cell only sends and receives messages from its neighbors highlighted in blue, reducing context requirement. \textbf{Right:} This figure describes how preemption can improve efficiency in 3-SAT problems where a search branch that finds a solution can send a message its parent without having to wait for other threads to finish their computation.}
    \label{fig:sudoku_example}
\end{figure}
We ground MPLMs in 2 structured reasoning tasks: Sudoku and 3-SAT Puzzles. These experiments allow us to empirically validate the efficiency gains on sparse problems we proved above. For both experiments, we teach a model, using supervised fine-tuning, to reason in a way that allows it to exploit the primitives afforded by MPLMs, more specifically the \textit{send} and \textit{recv} directives. We describe more details below:

\textbf{Training Procedure:}\; We simulate the message-passing reasoning process in Python and use the resulting trajectories to generate Chain-of-Thought training data. For Sudoku, we simulate execution of the naked-singles strategy iteratively; for SAT, we follow DPLL algorithm \citep{davis1962machine}. Example CoTs for each method can be found in Appendix \cref{app:sample_cot,app:sample_cot_fj,app:sample_cot_serial,app:sample_cot_sat_mplm,app:sample_cot_sat_fj,app:sample_cot_sat_serial}. Using these datasets, we perform supervised fine-tuning (SFT) on Qwen3-0.6B-Base \citep{yang2025qwen3technicalreport}. Each training run cost less than 48 H100 hours. More details can be found in Appendix \ref{app:data_gen}.

\textbf{Metrics:}\; To measure the performance of our method, we consider metrics including (i) \textbf{Accuracy:} the percentage of puzzles solved successfully; (ii) \textbf{Latency:} the average time taken to solve a puzzle in seconds; (iii) \textbf{Maximum context:} the maximum context required to train one puzzle; and (iv) \textbf{Sequential tokens:} the maximum number of causally dependent tokens that must be processed sequentially by the model, as used by \cite{pan2025learning}.

\textbf{Baselines:}\; For baselines we also generate data for (i) Serialized \textbf{(Serial)} Chain-of-Thought which generates every token autoregressively with no parallelism and (ii) Fork-Join \textbf{(FJ)} parallelism as implemented by \cite{pan2025learning} where a parent thread iteratively spawns child threads and the parent thread combines the final responses of the children. Additionally, we evaluate closed-source reasoning models on Sudoku puzzles, including DeepSeek-R1 and GPT-5 Pro, using their respective APIs to assess how performance scales with problem complexity. We disable tool usage to isolate the model's reasoning behavior without external computation. 

\subsection{Sudoku}

\noindent\textbf{Setup}\; Generalized Sudoku is an $N^2\times N^2$ constraint satisfaction problem requiring unique values across rows, columns, and sub-grids; we restrict to puzzles solvable by iteratively filling cells with only one remaining feasible value (naked-singles). We generate 1000 puzzles of sizes $4\times 4$, $9\times 9$, $16 \times 16$ and $25 \times 25$ for training and 200 puzzles for evaluation. See Appendix \ref{app:data_gen} for more details.


\subsubsection{A Message-Passing Approach to Sudoku}

\begin{figure*}[t!]
    \centering
    \includegraphics[width=0.9\textwidth]{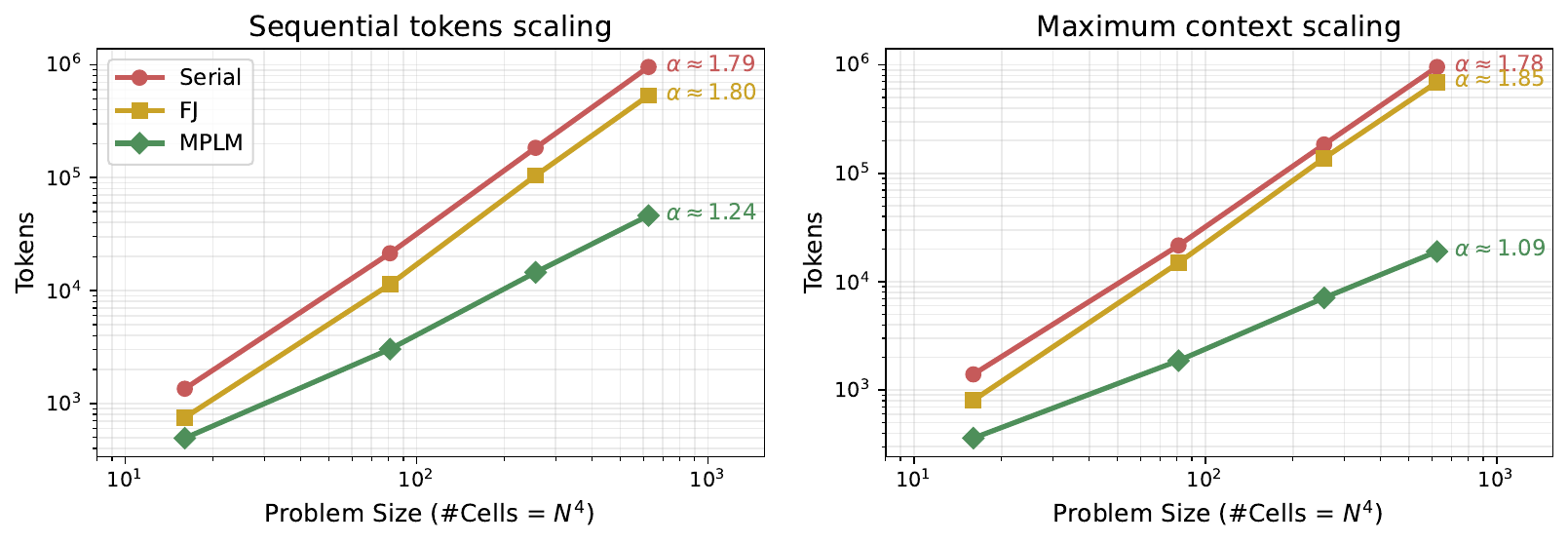}
    \caption{\textbf{Scaling} of sequential tokens (left) and maximum context tokens (right) required to solve a Sudoku puzzle as a function of the problem size ($N^4$). Curves are shown on a log-log axes with fitted power-law exponents $\alpha$. Message Passing (MPLM) exhibits significantly \textbf{lower scaling exponents} than the Serial and Fork-Join (FJ) reasoning paradigms for both sequential tokens and maximum context required, indicating both better efficiency and the ability to solve larger problems.}
    \label{fig:main_results}
\end{figure*}

In our MPLM design for Sudoku, we have a single agent for each cell in the Sudoku puzzle, which leads to $N^4$ workers for a $N^2\times N^2$ puzzle. First, the parent thread (ID `0') spawns $N^4$ child threads (IDs $1, 2,\cdots, N^4$) using the \texttt{<spawn>} directive and sends them the initial state of the puzzle. Then, each worker computes the possible set of values for its corresponding cell. For instance, in a $9\times 9$ puzzle, worker `12' would compute the possible values possible for cell $(1, 3)$. After computing this set, there are 2 possibilities:

\begin{enumerate}[leftmargin=*, topsep=0pt, itemsep=0pt]
\item \textbf{Case 1:} There are multiple possibilities for the current cell. In this case, it is necessary for the worker to gather more information before proceeding. Therefore, it generates the \texttt{<recv>} token, signaling that it is waiting for messages from its its `neighbors'. Neighbors here refer to the workers in control of cells in the same row, column, or sub-grid.
\item \textbf{Case 2:} There is only 1 possibility for the current cell. In this case, the worker sends a message using the \texttt{<send [list of IDs]>} directive to the parent thread (ID `0') and all its `neighbors' containing the value filled in the cell. For an $N^2\times N^2$ grid, each cell has $3N^2-2N-1$ neighbors (see Figure \ref{fig:sudoku_example} for an example). After sending the message, the model generates \texttt{<stop>} to end its execution.
\end{enumerate}

After all threads have stopped, the parent thread collates all the messages and generates the final solved Sudoku, which is parsed to evaluate the success/failure of the model.


\begin{table*}[t!]
\centering
\begin{tabular}{ccccc}
\hline
\textbf{Method} & \textbf{$n=2$ ($4\times4$)} & \textbf{$n=3$ ($9\times9$)} & \textbf{$n=4$ ($16\times16$)} & \textbf{$n=5$ ($25\times25$)} \\
\hline
MPLM         & \textbf{2.105s (100\%)} & \textbf{14.939s (100\%)} & \textbf{117.684s (92\%)} & \textbf{1017.263s (72\%)}   \\
FJ          & 2.940s (100\%) & 59.562s (93.00\%)              & \textcolor{red}{\xmark} & \textcolor{red}{\xmark} \\
Serial  & 4.545s (99\%)  & \textcolor{red}{\xmark}              & \textcolor{red}{\xmark} & \textcolor{red}{\xmark} \\

DeepSeek R1 & 144.271s (90\%)& 765.982s (0\%)& 597.939s (0\%) & 431.751s (0\%) \\
GPT-5 Pro     & (100\%) & (100\%)              & (45\%) & (20\%) \\
\hline
\end{tabular}
\caption{Avg. latency and accuracy across Sudoku sizes. We report wall-clock inference time and solution accuracy. A red cross (\textcolor{red}{\xmark}) denotes infeasible training due to context or compute constraints. MPLM remains feasible and accurate at larger grid sizes, while FJ, Serial baselines, and frontier models fail to scale. Due to the high cost of these models, we limit our evaluation to 20 queries per setting, and use the batched API which doesn't provide latency metrics. }
\label{tab:latency_and_accuracy}
\vspace{-0.5em}
\end{table*}

\subsubsection{Sudoku Results}
\textbf{MPLMs fundamentally improves scaling.}
We note that MPLMs operate in a qualitatively different scaling regime compared to both \textbf{Serial} and \textbf{Fork-Join (FJ)} reasoning as seen in Figure ~\ref{fig:main_results}. Serial and FJ approaches exhibit worse power-law behavior, with both sequential token counts and maximum context requirements scaling with higher exponents of $\alpha \approx 1.8$. In contrast, MPLM achieves \textit{substantially improved scaling} exponents ($\alpha \approx 1.2$ for sequential tokens and $\alpha \approx 1.1$ for maximum context) owing to its \textit{decentralized context}. This reflects a structural change in computation: MPLMs distribute reasoning across persistent workers that maintain local state and communicate point-to-point. By avoiding repeated global aggregation, MPLMs improve the asymptotic growth of both sequential computation and context usage. We also note that different paradigms might admit different CoT implementations, and alternative designs may improve constant factors; however, such changes \textit{do not affect the asymptotic scaling trends} that form the basis of our comparison. A more detailed theoretical analysis is provided in Appendix~\ref{app:sudoku_theory}.


\noindent\textbf{Message passing enable capabilities beyond frontier reasoning models.}
The improved scaling behavior of MPLM reasoning results in qualitative gains in problem-solving capability. In standard $N=3$ Sudoku puzzles, we find that DeepSeek-R1 fails to reliably solve instances whereas \textit{GPT-5 Pro} struggles in $25\times 25$ Sudoku puzzles solving just 4 of 20 puzzles. In contrast, MPLM successfully solves $72\%$ of $25\times 25$ instances after being trained on just 1000 puzzles. We note that this comparison is not intended as a direct head-to-head evaluation however, the results illustrate what the MPLM framework can unlock when a model is trained to exploit structured parallelism, a capability that is difficult to achieve within serial CoT reasoning regardless of model scale. Together, these results show that MPLMs enable a new level of structured problem-solving.  Additionally, we show that respawning can also improve efficiency for Sudoku in Appendix \ref{app:respawn}. We also conduct additional measurements of quantities such as KV-Cache hit rate and GPU utilization in Appendix \ref{app:profiling}.

\subsection{3-SAT}

\textbf{Setup.}\; We also study Boolean satisfiability (3-SAT), where \textit{preemption} enabled by \textit{asynchronous message passing} is critical. Solving SAT problems requires a search over possible assignments to variables. Our key insight to improve efficiency is that, once a satisfying assignment is found in one branch of the search tree, continuing computation in other branches is unnecessary. Such behavior is not possible with the Fork-Join paradigm, where the master only aggregates worker's responses after they have finished. In contrast, preemption can be easily implemented in the MPLM scaffold, where a worker can immediately send a message to its parent (which is actively waiting to receive messages) and the parent can decide to preempt the execution of other workers. To empirically verify this preemption capability, we generate random 3-SAT instances with variable counts ranging from 8 to 20 and train the model to solve the problems using SFT on the CoT. Training details are provided in Appendix \ref{app:sat_accuracy}.

\subsubsection{A Message-Passing Approach to 3-SAT}
For 3-SAT, our MPLM design follows a distributed DPLL procedure \citep{davis1962machine}. Each worker first performs unit propagation on its local subproblem. If the formula is resolved immediately, the model generates either \texttt{SAT} or \texttt{UNSAT}. Otherwise, it branches on a variable $x_i$ and spawns 2 children corresponding to the assignments $x_i=0$ and $x_i=1$, and waits for messages from its descendants. MPLM supports preemption through asynchronous message passing. When a thread stops, the controller terminates its entire sub-tree to improve efficiency and avoid unnecessary computation. In particular, a parent does not need to wait for all children to complete before making progress. Instead, it reacts to \textbf{the child that returns first}. If a child reports \texttt{SAT}, the parent immediately terminates the search. This behavior propagates upward recursively to the root node as shown in Figure \ref{fig:sudoku_example}.

\begin{figure*}[ht!]
    \centering
    \includegraphics[width=0.9\textwidth]{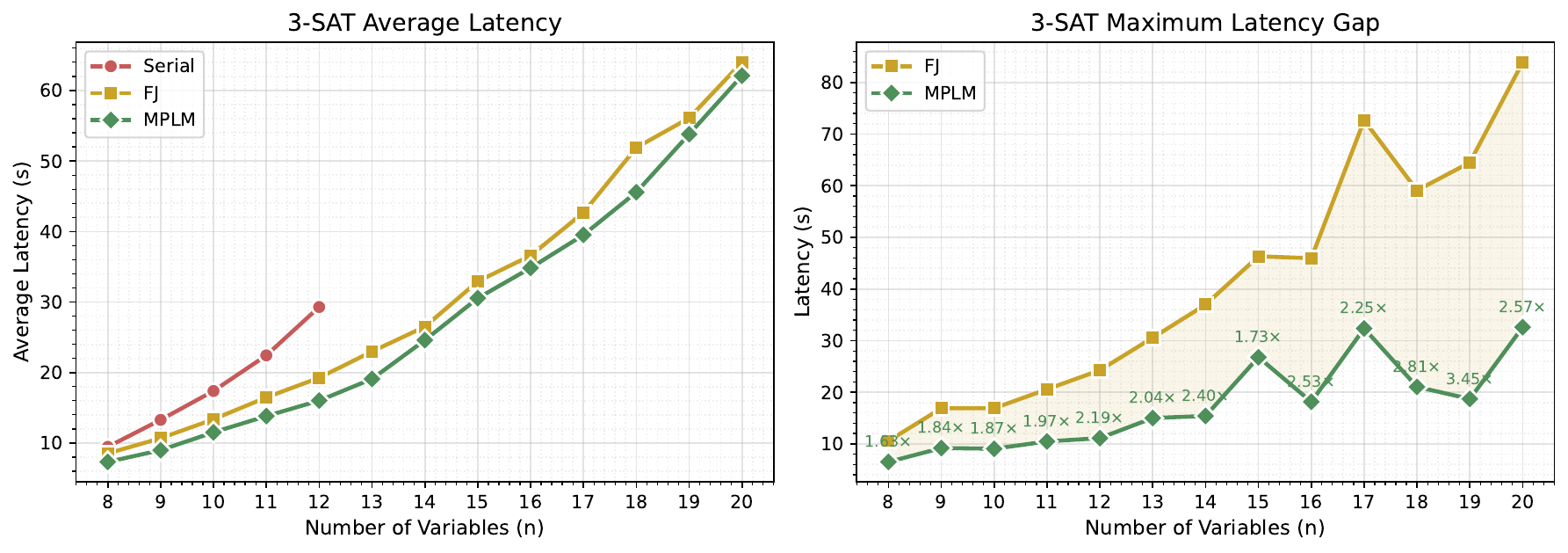}
    \caption{\textbf{Left:} Average inference latency across 100 test problems per variable count for MPLM, FJ, and Serial. \textbf{Right:} Latency and speedup on the maximum-speedup problem at each variable count, observed in the problems whose search tree is highly unbalanced. Note that Serial can’t be measured after 12 variables because it exhausts the context window.}
    \label{fig:sat_results}
\end{figure*}

\subsubsection{3-SAT Results}

\textbf{MPLMs reduce unnecessary computation through preemption.}\;
Figure~\ref{fig:sat_results} shows the latency results on 3-SAT for increasing complexity of the problem. The left plot shows the \emph{average latency} over the full test set for each number of variables, and the right plot shows the \emph{maximum speedup case} at each variable count, i.e., the test instance for which MPLM achieves the largest latency advantage over Fork-Join. Accuracy results are in Appendix \ref{app:sat_accuracy}. We observe that MPLMs are faster than other baselines for all sizes of the problem. Additionally, as shown in the right plot of Figure~\ref{fig:sat_results}, the greatest gains arise when the SAT search tree is highly unbalanced, where the two child branches of a parent differ dramatically in the amount of computation required before reaching a decisive node. In such cases, Fork-Join suffers from its centralized synchronization structure, and its overall latency is determined by the time required to reach the last leaf node. In contrast, MPLM allows for early termination, so the overall latency is determined by the time to the \emph{first} satisfying node rather than the \emph{last} explored node.

\subsection{Long Context QA}

\begin{table*}[t!]
\centering
\scriptsize
\setlength{\tabcolsep}{3pt}
\resizebox{\textwidth}{!}{
\begin{tabular}{lcccccccc}
\toprule
\multirow{2}{*}{\textbf{Partition}} 
& \multicolumn{4}{c}{\textbf{Qwen3-30B-A3B}} 
& \multicolumn{4}{c}{\textbf{Qwen3.6-35B-A3B}} \\
\cmidrule(lr){2-5} \cmidrule(lr){6-9}
& \makecell{MPLM \\ Accuracy ($\uparrow$)} 
& \makecell{MPLM \\ Latency ($\downarrow$)} 
& \makecell{RLM \\ Accuracy ($\uparrow$)} 
& \makecell{RLM \\ Latency ($\downarrow$)}
& \makecell{MPLM \\ Accuracy ($\uparrow$)}
& \makecell{MPLM \\ Latency ($\downarrow$)}
& \makecell{RLM \\ Accuracy ($\uparrow$)} 
& \makecell{RLM \\ Latency ($\downarrow$)} \\

\midrule
\textit{I. Single-Document QA} 
& \textbf{38.1\%} & \textbf{58.2s} & 36.0\% & 117.3s
& \textbf{48.0\%} & \textbf{107.9s} & 47.4\% & 233.0s \\

\textit{II. Multi-Document QA} 
& \textbf{35.0\%} & \textbf{58.8s} & 21.6\% & 104.8s
& \textbf{47.2\%} & \textbf{104.2s} & 40.0\% & 235.4s \\

\textit{III. Long In-context Learning} 
& \textbf{34.9\%} & \textbf{68.3s} & 26.5\% & 98.5s
& \textbf{49.4\%} & \textbf{118.1s} & 38.3\% & 327.6s \\

\textit{IV. Long-dialogue History Understanding} 
& \textbf{51.3\%} & \textbf{60.1s} & 32.0\% & 92.6s
& \textbf{61.6\%} & 94.1s & 59.0\% & \textbf{72.7s} \\

\textit{V. Code Repository Understanding} 
& \textbf{35.0\%} & \textbf{65.5s} & 32.5\% & 93.3s
& 28.0\% & \textbf{56.7s} & \textbf{52.0\%} & 187.8s \\

\textit{VI. Long Structured Data Understanding} 
& \textbf{42.4\%} & \textbf{64.2s} & 28.1\% & 99.8s
& 39.4\% & \textbf{103.5s} & \textbf{66.7\%} & 104.2s \\

\midrule
\textbf{Average} 
& \textbf{37.8\%} & \textbf{61.3s} & 29.7\% & 105.7s
& 46.5\% & \textbf{102.2s} & \textbf{46.7\%} & 223.5s \\

\bottomrule
\end{tabular}
}
\caption{LongBench-v2 results aggregated by major task category using task-count weighted averages. We compare MPLM against the RLM fork-join baseline using Qwen3-30B-A3B and Qwen3.6-35B-A3B without fine-tuning.}
\label{tab:longbenchv2}
\vspace{-0.5em}
\end{table*}

\textbf{Setup.}\; While Sudoku and SAT isolate the algorithmic benefits of message passing in controlled settings, we also evaluate MPLMs on a more naturalistic long-context reasoning benchmark. We use LongBench-v2~\citep{longbenchv2}, which contains 503 multiple-choice questions with contexts ranging from 8K to 2M words. The benchmark covers a diverse set of long-context tasks, including single-document QA, multi-document QA, long in-context learning, long-dialogue history understanding, code repository understanding, and long structured-data understanding. This setting is a natural testbed for MPLMs because solving long-context QA can be tackled by decomposing the context to multiple threads and cross-referencing evidence distributed across different chunks of the context. We describe our approach in detail below.

\subsubsection{A Message-Passing Approach to Long Context Tasks}
 The parent thread first spawns worker for different parts of the large context based on a maximum threshold. For instance, if the threshold is 25K tokens and the entire context is 200K tokens, then 8 parallel threads are spawned.  Then, each worker retrieves its assigned chunk into its local context and sends a concise summary to the parent. The parent then reasons over the aggregated summaries and can \textbf{selectively} and \textbf{iteratively} communicate with workers. If the parent needs more precise evidence from a specific region of the context, it sends a targeted query to the relevant workers; if the summaries are insufficient, it can query a broader subset or all workers. Once enough evidence has been collected, the parent terminates and produces the final answer. An example can be seen in Appendix \ref{app:long_qa}.

\noindent\textbf{Baseline.}\; We compare against Recursive Language Models (RLMs)~\citep{rlm}, a reasoning framework which uses fork-join style parallelism to spawn multiple threads which are equipped with specialized tools to gather context and write code. RLM workers are stateless, after returning their response to the parent, they do not maintain persistent local state and cannot be queried again through point-to-point communication. This makes RLM a strong and relevant baseline for evaluating whether persistent workers and selective message passing provide benefits beyond generic parallel decomposition.

\subsubsection{LongBench-v2 Results}

\textbf{Prompted models can already use MPLM directives.}\;
Unlike our Sudoku and SAT experiments, this evaluation does not rely on any task-specific training. Both Qwen3-30B-A3B and Qwen3.6-35B-A3B are prompted to use the MPLM scaffold directly. The results therefore indicate that sufficiently capable models can already follow message-passing directives and make use of persistent point-to-point communication at inference time. For future work, these prompted MPLM traces can be used as initial data for training stronger MPLM policies through supervised fine-tuning or reinforcement learning.

\noindent\textbf{Persistent workers and point-to-point communication enable efficient iterative evidence aggregation.}\;
As shown in Table~\ref{tab:longbenchv2}, MPLM improves average accuracy from $29.7\%$ to $37.8\%$ on Qwen3-30B-A3B while reducing average latency from $105.7$s to $61.3$s, which corresponds to approximately $1.7\times$ lower latency. On the stronger Qwen3.6-35B-A3B model, MPLM achieves comparable average accuracy to RLM ($46.5\%$ vs. $46.7\%$) while reducing average latency from $223.5$s to $102.2$s, which corresponds to approximately $2.2\times$ lower latency. We observe that in MPLM, the parent can repeatedly query the same worker for finer-grained details without reloading that part of the document. At the same time, point-to-point communication allows the parent to contact only the workers that are relevant to the current uncertainty. This reduces unnecessary context consumption and makes iterative evidence refinement more efficient. In contrast, fork-join workers are transient. Once a worker returns its response, its local context is discarded, making subsequent follow-up queries require re-spawning workers and reconstructing context.

\section{Related Literature}
\textbf{Scaling test-time compute.}\;
Various techniques such as Self-Consistency \citep{wang2023selfconsistency}, Best-of-N \citep{cobbe2021trainingverifierssolvemath}, Monte Carlo Tree Search \citep{xie2024monte}, Tree-of-Thoughts \citep{yao2023tree} have been proposed to improve performance by scaling test-time compute. Another direction of scaling is sequential scaling which includes techniques such as Chain-of-Thoughts \citep{wei2022chain}, Self-Refine \citep{madaan2023selfrefine}, Self-Correction \citep{kumar2024traininglanguagemodelsselfcorrect}. However, with the advent of large reasoning models, long Chain-of-Thoughts learnt using reinforcement learning \citep{shao2024deepseekmathpushinglimitsmathematical, zelikman2022star} have been more effective in scaling test-time compute \citep{Guo_2025} compared to other contemporary methods.

\textbf{Efficient Reasoning.}\;
As test-time compute for reasoning is scaling, it is important to scale it efficiently. Various methods have been proposed to increase the efficiency of reasoning including length based regularization \citep{kimiteam2025kimik15scalingreinforcement, arora2025training, aggarwal2025l, luo2025o1prunerlengthharmonizingfinetuningo1like} and iterative summarization \citep{aghajohari2025markovianthinkerarchitectureagnosticlinear, yan2025inftythinkbreakinglengthlimits, yang2025pencil}. However, these works primarily target the monolithic CoT setting whereas we focus on the self-parallelized reasoning setting.

\textbf{Parallelizing LLM Threads.}\;
Our work is most closely related to parallelizing reasoning. Although techniques such as Self-Consistency improve performance, all generations are independent and lack coordination. Several papers have proposed using parallel reasoning structures such as in the works by \cite{yao2023tree, pmlr-v235-du24e, ning2024skeletonofthought, teng2025atom}.
Adaptive Parallel Reasoning by \citet{pan2025learning} proposes the \textbf{fork-join} parallelism model and optimizes LLMs via RL for the game of Countdown. Multiverse introduced by \citet{yang2025multiverse} also parallelizes inference whilst also introducing modifications in the attention mechanism. Another body of research \citep{chen2025aspdunlockingadaptiveserialparallel, zheng2025parallelr1parallelthinkingreinforcement, yang2025multiverse, macfarlane2025instilling, lian2025threadweaveradaptivethreadingefficient} focuses on extracting parallelizable reasoning structures from CoTs for math problems. Relatedly, \citet{jin2025learning, ning2024skeletonofthought, liu2024aparllmsautoparallelautoregressive} focus on decoding different parts of a Chain-of-Thought in parallel. However, none of these works focus on an explicit framework to pass messages between independent LLM threads with a persistent context.

\textbf{Multi-Agent Frameworks.} Multi-agent frameworks such as CAMEL \citep{li2023camel}, MetaGPT \citep{hong2024metagpt}, AutoGen \citep{wu2024autogen}, GPTSwarm \citep{pmlr-v235-zhuge24a} and AgentVerse \citep{chen2024agentverse}  study how multiple specialized LLM instances can collaborate through natural-language communication. These systems show that coordination and role specialization can improve performance on complex tasks. However, in most existing frameworks, agent roles, communication patterns, and agent creation mechanisms are largely pre-defined by the developer. In contrast, our work focuses on decomposing a single reasoning task using \emph{runtime}-level communication primitives inspired by MPI, where spawn/send/recv/stop operations are first-class decoding directives which allow task specific agent creation and communication. Crucially, the use of these primitives is learned rather than hard-coded, allowing the model to flexibly allocate computation and more effectively leverage its internal reasoning capabilities.

\section{Discussion, Conclusions and Limitations}
In this work, we introduce MPLMs, which enable LLMs to dynamically create new threads and communicate through point-to-point messages. This capability allows LLMs to break down complex tasks and reason about individual sub-problems in a decentralized fashion, while communicating only when necessary. This resembles how an actual human organization, such as a company, might function. To study the benefits of MPLMs, we instantiate them across both structured reasoning tasks and realistic long-context tasks. On Sudoku and SAT, we show that MPLMs require less context and enable more efficient inference compared to other reasoning paradigms. On LongBench-v2, we further find that sufficiently capable prompted models can already use MPLM directives to perform iterative evidence aggregation more efficiently, reducing latency while maintaining or improving accuracy. 

We also acknowledge several limitations of our work. A key limitation is that, MPLMs require each worker to determine which other workers it should communicate with and when such communication is necessary. In structured domains such as Sudoku and SAT, this communication pattern can be specified relatively clearly, but in open-ended tasks it may require careful prompting or additional training. We believe that an important direction for future work is to generate parallel message-passing CoT data for training stronger MPLM policies beyond predefined scaffolds. This could enable MPLMs to generalize to less structured domains like agentic reasoning, code generation, and theorem proving. We also believe that additional communication primitives, such as broadcasting and all-gather, may further improve the expressiveness and efficiency of MPLM systems.

\section{Acknowledgements}

We would like to thank Yiding Jiang, Yuda Song, Fahim Tajwar, Abitha Thankaraj, Bhavya Agrawalla, Pranjal Aggarwal, Gaurav Ghosal, Jatin Prakash and Lili Chen for their feedback on the project and suggestions on  improving the manuscript.


\bibliography{reference}

\clearpage
\appendix

\section{MPLM Inference Psuedo-code}
\label{sec:pseudocode}

\begin{algorithm}[h!]
\caption{MPI-style Parallel Decoding (wait-for-all version) }
\label{alg:mpi_vllm}
\begin{algorithmic}[1]

\REQUIRE Inference engine $\mathcal{E}$ (vLLM) that decodes multiple threads in parallel
\REQUIRE Single initial prompt \texttt{prompt} for thread $0$

\STATE $\texttt{Running, Runnable} \leftarrow \{0\}, \{0\}$ \COMMENT{all alive threads}
\STATE $\texttt{Inbox}[0], \;\texttt{WaitingFor}[0] \leftarrow [\ ], [\ ]$

\WHILE{$\texttt{Running} \neq \emptyset$}

  \STATE $\mathcal{E}$ decodes all threads in $\texttt{Runnable}$ in parallel until a directive boundary (or EOS)

  \FORALL[conceptually in parallel]{$i \in \texttt{Runnable}$}
    \IF{thread $i$ reached a directive boundary (or EOS)}
      \STATE Parse one event $e_i$ from thread $i$

      \IF{$e_i$ is \texttt{<spawn [}$\mathcal{J}$\texttt{]>} \texttt{prompt'} \texttt{</spawn>}}
        \FORALL{$j \in \mathcal{J}$}
            \STATE $\texttt{Running, Runnable} \leftarrow \texttt{Running} \cup \{j\}, \texttt{Runnable} \cup \{j\}$
            \STATE $\texttt{Inbox}[j], \;\texttt{WaitingFor}[j] \leftarrow [\ ], [\ ]$
            \STATE Submit \texttt{prompt'} to $\mathcal{E}$ as thread $j$
        \ENDFOR
      \ENDIF

      \IF{$e_i$ is \texttt{<send [}$\mathcal{K}$\texttt{]>} $\texttt{msg}$ \texttt{</send>}}
        \FORALL{$k \in \mathcal{K}$}
            \STATE Append $(i,\texttt{msg})$ to $\texttt{Inbox}[k]$
        \ENDFOR
      \ENDIF

      \IF{$e_i$ is \texttt{<recv [}$\mathcal{R}$\texttt{]>}}
        \STATE $\texttt{WaitingFor}[i] \leftarrow \mathcal{R}$
        \STATE $\texttt{Runnable} \leftarrow \texttt{Runnable} \setminus \{i\}$ \COMMENT{block immediately on recv}
      \ENDIF

      \IF{$e_i$ is \texttt{<stop>} \OR\ EOS}
        \STATE $\texttt{Running, Runnable} \leftarrow \texttt{Running} \setminus \{i\}, \texttt{Runnable} \setminus \{i\}$
        \STATE $\texttt{WaitingFor}[i] \leftarrow [\ ]$
      \ENDIF

    \ENDIF
  \ENDFOR

  \STATE $\texttt{Blocked} \leftarrow \texttt{Running} \setminus \texttt{Runnable}$
  \FORALL[conceptually in parallel]{$i \in \texttt{Blocked}$}
  \IF{$\texttt{WaitingFor}[i] = [\ ]$}
    \STATE $\texttt{Runnable} \leftarrow \texttt{Runnable} \cup \{i\}$ \COMMENT{all required senders died}
  \ELSE
    \STATE $\texttt{HaveSenders} \leftarrow \{\, s \mid \exists (s,m)\in \texttt{Inbox}[i] \,\}$
    \IF{$\texttt{WaitingFor}[i] \subseteq \texttt{HaveSenders}$}
      \STATE Consume the needed messages from $\texttt{Inbox}[i]$ for all $s \in \texttt{WaitingFor}[i]$
      \STATE $\texttt{WaitingFor}[i] \leftarrow [\ ]$
      \STATE $\texttt{Runnable} \leftarrow \texttt{Runnable} \cup \{i\}$
    \ENDIF
    \ENDIF
  \ENDFOR

\ENDWHILE

\end{algorithmic}
\end{algorithm}

\newpage

\section{Additional Aspects of MPLMs}
\label{app:mplm_additional}
\subsection{Implementation of \texttt{recv}}
\label{sec:sync}
The \texttt{recv} directive requires inference for a thread to be stopped to ensure all incoming messages are accumulated. We note that \texttt{recv} can be implemented in various ways. The implementations we consider here are:
\begin{enumerate}
\item \textbf{Asynchronous (wait-for-all):} When a thread executes \texttt{recv(S)}, it blocks until it has received all required messages from all active threads in the set $S$ . Once every required sender has produced a message for that worker, the thread becomes runnable immediately. This is the main version we consider in the work.
\item \textbf{Asynchronous (wait-for-any):} When a thread executes \texttt{recv(S)}, it blocks until it receives at least one message from any sender in $S$. The runtime then immediately unblocks the thread, allowing it to continue decoding with partial information.
\end{enumerate}
These two implementations can have different use cases which could potentially benefit in different types of applications (see Appendix \ref{sec:sync_under_variable_latency} for an example on a toy setup). It is also possible to have different directives for each of these implementations, but we leave this for future work.

\subsection{Preemption via Early Subtree Cancellation}
In addition to reducing aggregation cost, MPLMs support \emph{preemption}. Here, we use \emph{preemption} to mean early cancellation of active descendants once they can no longer affect the final output. This behavior is natural in MPLMs because the runtime maintains the spawn tree and thread identities are globally addressable, so messages can be sent not only between siblings but also across different levels of the threads.

Consider a thread $p$ that spawns a set of children $C(p)$ to solve a hierarchical sub-problem. In many reasoning tasks, local decision of $p$ is determined by a predicate over incoming child messages that may become satisfied before all children finish. For example, in OR-type search problems, a single positive certificate may already suffice. Once $p$ has received enough information to determine its output, it can immediately send a message to its own parent (or any relevant ancestor) and terminate. The runtime then cancels the still-active descendants in the subtree rooted at $p$ whose outputs are no longer reachable from a live consumer.

This capability is difficult to express in standard fork-join execution, where progress is tied to join directives, and differs from controller-based tree search, where branch pruning is externalized to a centralized controller. Importantly, MPLM preemption does not require introducing a new decoding primitive; it emerges from persistent thread identities, direct message passing, and runtime cancellation of unreachable descendants.

\subsection{Context Management via Respawning}
The \texttt{spawn} directive can be used in a special manner where a thread can spawn a thread that is already existing, including itself. Note that in this case, the controller merely creates a new thread with the same logical identity but with a different prompt. We term this capability: \textit{respawning}. This capability can be used by the LLM in interesting ways. For instance, a worker can construct a compact \emph{inheritance payload} that summarizes the minimal information required for future progress (e.g., verified intermediate results, current tasks or constraints) and then respawn. By discarding the redundant history, the newly spawned worker begins from this curated and compressed context, effectively resetting its working memory without losing task-critical state.

This is highly beneficial because an MPLM can summarize its own context resulting in: \textbf{(i) Bounded per-step cost.} By controlling the context length of each worker, respawning prevents per-token computation from becoming unbounded over time, reducing wall-clock latency for long-running tasks. \textbf{(ii) Unbounded reasoning.} Repeatedly summarizing each worker's state and continuing from a refreshed context could, in theory, allow threads to run for arbitrarily many iterations without exceeding the model's context window.

\section{Sudoku Generation Procedure}
\label{app:data_gen}

For generating Sudoku puzzles, we use a deletion probability $p$ which is a measure of the \textit{sparsity} of the puzzle. The higher the value of $p$, the sparser it will be and in general would require more iterations to solve using the naked-singles strategy. In our work, we use $p=1.0$ for both training and inference.

\begin{algorithm*}[h]
\caption{Sudoku Dataset Generation (Naked-Single Solvable)}
\label{alg:sudoku_gen_compact}
\begin{algorithmic}[1]

\REQUIRE Block size $\texttt{base\_n}$, grid size $N \leftarrow \texttt{base\_n}^2$
\REQUIRE Target size $\texttt{size}$, deletion probability $p \in [0,1]$

\STATE $\mathcal{D} \leftarrow [\ ]$
\WHILE{$|\mathcal{D}| < \texttt{size}$}
  \STATE $\texttt{solution} \leftarrow$ \textsc{RandomSolvedGrid}$(N,\texttt{base\_n})$
  \STATE $\texttt{original} \leftarrow \texttt{solution}$
  \STATE $\texttt{cells} \leftarrow$ all cell indices $(r,c)$
  \FORALL{$(r,c) \in \texttt{cells}$}
    \STATE $\texttt{saved} \leftarrow \texttt{original}[r][c]$;\ \ $\texttt{original}[r][c] \leftarrow 0$
    \IF{\textsc{NakedSingleSolvable}$(\texttt{original}, \texttt{solution}, \texttt{base\_n})$}
      \STATE \COMMENT{with probability p remove this cell, otherwise restore it}
      \IF{$\textsc{Bernoulli}(p)=0$} 
        \STATE $\texttt{original}[r][c] \leftarrow \texttt{saved}$
      \ENDIF
    \ELSE
      \STATE $\texttt{original}[r][c] \leftarrow \texttt{saved}$
    \ENDIF
  \ENDFOR
  \STATE Append $(\texttt{original}, \texttt{solution})$ to $\mathcal{D}$
\ENDWHILE
\STATE \textbf{return} $\mathcal{D}$

\vspace{0.4em}
\STATE
\STATE \textbf{Predicate} \textsc{NakedSingleSolvable}$(\texttt{original}, \texttt{solution}, \texttt{base\_n})$
\STATE $N \leftarrow \texttt{base\_n}^2$;\ \ $\texttt{board} \leftarrow$ copy of $\texttt{original}$
\WHILE{there exists an empty cell in $\texttt{board}$ that has exactly one feasible value}
  \STATE Pick such a cell $(r,c)$
  \STATE Compute $\texttt{used} \leftarrow$ numbers already present in row $r$, column $c$, and the $\texttt{base\_n}\times\texttt{base\_n}$ box of $(r,c)$
  \STATE Let $\texttt{cand} \leftarrow \{1,\dots,N\}\setminus \texttt{used}$
  \IF{$|\texttt{cand}| \neq 1$}
    \STATE Continue
  \ENDIF
  \STATE Fill $\texttt{board}[r][c]$ with the unique value in $\texttt{cand}$
\ENDWHILE
\STATE \textbf{return} $(\texttt{board} = \texttt{solution})$

\end{algorithmic}
\end{algorithm*}

\subsection{Training details}
We generate CoT for each of the training paradigms using a simple Python program. An example of the type of CoTs we use is given in \cref{app:sample_cot,app:sample_cot_fj,app:sample_cot_serial}. This gives us a dataset of (prompt, response) pairs which we use for SFT using the LlamaFactory \citep{zheng2024llamafactory} library. We use a batch size of 128 with a learning rate of $10^{-5}$ using the Adam \citep{DBLP:journals/corr/KingmaB14} optimizer. We observe that as we scale the size of the Sudoku puzzle, the master traces in the CoT become less populous which causes lower learning signal on them and the model fails to perform the work required by the master. In order to counteract this, we upweight the number of master samples in the dataset by a factor of 5 for $N=3$, a factor of 20 for $N=4$ and 100 for $N=5$. We choose the model based on the validation token accuracy on a subset of 5k prompts. 

\section{SAT Training Details}\label{app:sat_accuracy}

For SAT, satisfiable instances are generated by first sampling a random assignment and then constructing clauses that are consistent with it, whereas unsatisfiable instances are produced by random sampling and verified using a symbolic SAT solver. Then, we generate CoT for each of the training paradigms using a simple Python program. An example of the type of CoTs we use is given in \cref{app:sample_cot_sat_mplm,app:sample_cot_sat_fj,app:sample_cot_sat_serial}. This gives us a dataset of (prompt, response) pairs which we use for SFT using the LlamaFactory \citep{zheng2024llamafactory} library. We use a batch size of 128 with a learning rate of $10^{-5}$ using the Adam \citep{DBLP:journals/corr/KingmaB14} optimizer. We construct a training set of 10{,}000 instances by sampling from problems with 8 to 20 variables. To better support performance on harder instances, we upsample problems with larger numbers of variables during training. We then evaluate the resulting model on test sets consisting of 100 instances for each variable count from 8 to 20. The SAT accuracy results are reported below.

\begin{table}[H]
\centering
\scriptsize
\resizebox{\linewidth}{!}{%
\begin{tabular}{lccccccccccccc}
\hline
\textbf{Model} & \textbf{8} & \textbf{9} & \textbf{10} & \textbf{11} & \textbf{12} & \textbf{13} & \textbf{14} & \textbf{15} & \textbf{16} & \textbf{17} & \textbf{18} & \textbf{19} & \textbf{20} \\
\hline
MPLM   & 100\% & 91\% & 99\% & 91\% & 95\% & 94\% & 92\% & 87\% & 92\% & 89\% & 90\% & 86\% & 85\% \\
FJ     & 97\%  & 95\% & 94\% & 93\% & 91\% & 93\% & 90\% & 91\% & 91\% & 88\% & 90\% & 85\% & 86\% \\
Serial & 100\% & 95\% & 98\% & 93\% & 94\% & \textcolor{red}{\xmark} & \textcolor{red}{\xmark} & \textcolor{red}{\xmark} & \textcolor{red}{\xmark} & \textcolor{red}{\xmark} & \textcolor{red}{\xmark} & \textcolor{red}{\xmark} & \textcolor{red}{\xmark} \\
\hline
\end{tabular}%
}
\caption{Accuracy on SAT problems per variable count (8--20). A red cross (\textcolor{red}{\xmark}) denotes infeasible evaluation due to context length constraints.}
\label{tab:accuracy}
\vspace{-0.5em}
\end{table}

\section{Sample Sudoku Chain-of-Thoughts (MPLM)}
\label{app:sample_cot}

This section provides representative traces from the master thread and a worker thread.
We display the raw directive syntax used by the runtime, including \texttt{<spawn>}, \texttt{<recv>}, \texttt{<send>}, and \texttt{<stop>}.
For readability, we truncate long candidate maps.

\vspace{0.5em}
\subsection{Master Trace (Initialization)}
\noindent\textbf{Master prompt.}
\begin{small}
\begin{verbatim}
Solve this Sudoku:
{ (0,0): {6}, (0,1): {8}, (0,2): {9}, (0,3): {4}, (0,4): {3}, (0,5): {1},
  (0,6): {2}, (0,8): {7}, (1,0): {1}, (1,1): {4}, (1,2): {3}, ...,
  (8,8): {4} }
\end{verbatim}
\end{small}

\noindent\textbf{Master response.}
\begin{small}
\begin{verbatim}
<spawn [1,2,3,4,5,6,7,8,9,10,11,12,13,14,15,16,17,18,19,20,21,22,23,24,25,26,
        27,28,29,30,31,32,33,34,35,36,37,38,39,40,41,42,43,44,45,46,47,48,49,
        50,51,52,53,54,55,56,57,58,59,60,61,62,63,64,65,66,67,68,69,70,71,72,
        73,74,75,76,77,78,79,80,81]>
{ (0,0): {6}, (0,1): {8}, (0,2): {9}, (0,3): {4}, (0,4): {3}, (0,5): {1},
  (0,6): {2}, (0,8): {7}, (1,0): {1}, (1,1): {4}, (1,2): {3}, ...,
  (8,8): {4} }
</spawn>
<recv [1,2,3,4,5,6,7,8,9,10,11,12,13,14,15,16,17,18,19,20,21,22,23,24,25,26,
       27,28,29,30,31,32,33,34,35,36,37,38,39,40,41,42,43,44,45,46,47,48,49,
       50,51,52,53,54,55,56,57,58,59,60,61,62,63,64,65,66,67,68,69,70,71,72,
       73,74,75,76,77,78,79,80,81]>
\end{verbatim}
\end{small}

\vspace{0.8em}
\subsection{Worker Trace (Example: Worker 42)}
\noindent\textbf{Worker prompt (first turn).}
\begin{small}
\begin{verbatim}
Your id is: 42.
{ (0,0): {6}, (0,1): {8}, (0,2): {9}, (0,3): {4}, (0,4): {3}, (0,5): {1},
  (0,6): {2}, (0,8): {7}, (1,0): {1}, (1,1): {4}, (1,2): {3}, ...,
  (8,8): {4} }
\end{verbatim}
\end{small}

\noindent\textbf{Worker response (first turn).}
\begin{small}
\begin{verbatim}
My id is 42 which corresponds to cell (4,5). The cell is empty.
The remaining candidates are: {1, 2, 3, 4, 5, 6, 7, 8, 9}.
<recv>
\end{verbatim}
\end{small}

\vspace{0.6em}
\noindent\textbf{Worker prompt (after receiving neighbor messages).}
\begin{small}
\begin{verbatim}
Your id is: 42.
{ (0,0): {6}, (0,1): {8}, (0,2): {9}, (0,3): {4}, (0,4): {3}, (0,5): {1},
  (0,6): {2}, (0,8): {7}, (1,0): {1}, (1,1): {4}, (1,2): {3}, ...,
  (8,8): {4} }
My id is 42 which corresponds to cell (4,5). The cell is empty.
The remaining candidates are: {1, 2, 3, 4, 5, 6, 7, 8, 9}.
<recv>
{ From 6: 1 }
{ From 31: 7 }
{ From 32: 2 }
...
{ From 78: 8 }
</recv>
\end{verbatim}
\end{small}

\noindent\textbf{Worker response (computes its value and reports).}
\begin{small}
\begin{verbatim}
The neighboring cells have sent the following values: {1, 2, 3, 4, 6, 7, 8, 9}.
The remaining candidates are: {5}.
<send [0,6,15,24,31,32,33,37,38,39,40,41,43,44,45,49,50,51,60,69,78]>
{ From 42: 5 }
</send>
<stop>
\end{verbatim}
\end{small}

\vspace{0.8em}
\subsection{Master Trace (Termination)}
\noindent\textbf{Master prompt (after collecting worker outputs).}
\begin{small}
\begin{verbatim}
Solve this Sudoku:
{ (0,0): {6}, (0,1): {8}, (0,2): {9}, (0,3): {4}, (0,4): {3}, (0,5): {1},
  (0,6): {2}, (0,8): {7}, (1,0): {1}, (1,1): {4}, (1,2): {3}, ...,
  (8,8): {4} }
<spawn [1,2,3,4,5,6,7,8,9,10,11,12,13,14,15,16,17,18,19,20,21,22,23,24,25,26,
        27,28,29,30,31,32,33,34,35,36,37,38,39,40,41,42,43,44,45,46,47,48,49,
        50,51,52,53,54,55,56,57,58,59,60,61,62,63,64,65,66,67,68,69,70,71,72,
        73,74,75,76,77,78,79,80,81]>{ ... }</spawn>
<recv [1,2,3,4,5,6,7,8,9,10,11,12,13,14,15,16,17,18,19,20,21,22,23,24,25,26,
       27,28,29,30,31,32,33,34,35,36,37,38,39,40,41,42,43,44,45,46,47,48,49,
       50,51,52,53,54,55,56,57,58,59,60,61,62,63,64,65,66,67,68,69,70,71,72,
       73,74,75,76,77,78,79,80,81]>
{ From 1: 6 }
{ From 2: 8 }
{ From 3: 9 }
...
{ From 81: 4 }
</recv>
\end{verbatim}
\end{small}

\noindent\textbf{Master response (final).}
\begin{small}
\begin{verbatim}
<stop>
[[6, 8, 9, 4, 3, 1, 2, 5, 7],
 [1, 4, 3, 2, 5, 7, 8, 9, 6],
 [7, 2, 5, 8, 9, 6, 4, 3, 1],
 [5, 1, 4, 7, 2, 9, 6, 8, 3],
 [3, 6, 8, 1, 4, 5, 7, 2, 9],
 [9, 7, 2, 6, 8, 3, 1, 4, 5],
 [4, 3, 6, 5, 1, 2, 9, 7, 8],
 [8, 9, 7, 3, 6, 4, 5, 1, 2],
 [2, 5, 1, 9, 7, 8, 3, 6, 4]]
</stop>
\end{verbatim}
\end{small}

\section{Sample Sudoku Chain-of-Thoughts (Fork-Join)}
\label{app:sample_cot_fj}

This section provides representative traces from the master thread and a worker thread
under the Fork/Join (FJ) scheduling setting.
We display the raw directive syntax used by the runtime, including \texttt{<spawn>}, \texttt{<recv>}, \texttt{<send>}, and \texttt{<stop>}.
For readability, we truncate long candidate maps.

\vspace{0.5em}
\subsection{Master Trace (Initialization)}
\noindent\textbf{Master prompt.}
\begin{small}
\begin{verbatim}
Solve this Sudoku:
{ (0,0): {6}, (0,5): {1}, (0,7): {5}, (1,2): {3}, (1,3): {2},
  (2,0): {7}, (2,1): {2}, (2,2): {5}, (2,6): {4}, (3,2): {4},
  (3,6): {6}, (3,8): {3}, (4,7): {2}, (5,0): {9}, (5,1): {7},
  (5,3): {6}, (5,8): {5}, (6,1): {3}, (7,0): {8}, (7,3): {3},
  (7,4): {6}, (8,2): {1}, (8,3): {9}, (8,7): {6}, (8,8): {4} }
\end{verbatim}
\end{small}

\noindent\textbf{Master response.}
\begin{small}
\begin{verbatim}
<spawn [2,3,4,5,7,9,10,11,14,15,16,17,18,22,23,24,26,27,28,29,
        31,32,33,35,37,38,39,40,41,42,43,45,48,50,51,52,53,55,
        57,58,59,60,61,62,63,65,66,69,70,71,72,73,74,77,78,79]>
{ (0,0): {6}, (0,5): {1}, (0,7): {5}, (1,2): {3}, (1,3): {2},
  (2,0): {7}, (2,1): {2}, (2,2): {5}, (2,6): {4}, (3,2): {4},
  (3,6): {6}, (3,8): {3}, (4,7): {2}, (5,0): {9}, (5,1): {7},
  (5,3): {6}, (5,8): {5}, (6,1): {3}, (7,0): {8}, (7,3): {3},
  (7,4): {6}, (8,2): {1}, (8,3): {9}, (8,7): {6}, (8,8): {4} }
</spawn>
<recv [2,3,4,5,7,9,10,11,14,15,16,17,18,22,23,24,26,27,28,29,
       31,32,33,35,37,38,39,40,41,42,43,45,48,50,51,52,53,55,
       57,58,59,60,61,62,63,65,66,69,70,71,72,73,74,77,78,79]>
\end{verbatim}
\end{small}

\vspace{0.6em}
\noindent\textbf{Master prompt (after receiving worker reports).}
\begin{small}
\begin{verbatim}
Solve this Sudoku:
{ (0,0): {6}, (0,5): {1}, (0,7): {5}, (1,2): {3}, (1,3): {2},
  (2,0): {7}, (2,1): {2}, (2,2): {5}, (2,6): {4}, (3,2): {4},
  (3,6): {6}, (3,8): {3}, (4,7): {2}, (5,0): {9}, (5,1): {7},
  (5,3): {6}, (5,8): {5}, (6,1): {3}, (7,0): {8}, (7,3): {3},
  (7,4): {6}, (8,2): {1}, (8,3): {9}, (8,7): {6}, (8,8): {4} }
<spawn [2,3,4,5,7,9,10,11,14,15,16,17,18,22,23,24,26,27,28,29,
        31,32,33,35,37,38,39,40,41,42,43,45,48,50,51,52,53,55,
        57,58,59,60,61,62,63,65,66,69,70,71,72,73,74,77,78,79]>{ ... }</spawn>
<recv [2,3,4,5,7,9,10,11,14,15,16,17,18,22,23,24,26,27,28,29,
       31,32,33,35,37,38,39,40,41,42,43,45,48,50,51,52,53,55,
       57,58,59,60,61,62,63,65,66,69,70,71,72,73,74,77,78,79]>
{ From 2: None }
{ From 3: None }
{ From 4: None }
{ From 5: None }
...
{ From 22: 8 }
...
{ From 74: 5 }
...
{ From 79: None }
</recv>
\end{verbatim}
\end{small}

\vspace{0.5em}
\noindent\textbf{Master response (next fork/join step).}
\begin{small}
\begin{verbatim}
<spawn [2,3,4,5,7,9,10,11,14,15,16,17,18,23,24,26,27,28,29,31,
        32,33,35,37,38,39,40,41,42,43,45,48,50,51,52,53,55,57,
        58,59,60,61,62,63,65,66,69,70,71,72,73,77,78,79]>
{ (0,0): {6}, (0,5): {1}, (0,7): {5}, (1,2): {3}, (1,3): {2},
  (2,0): {7}, (2,1): {2}, (2,2): {5}, (2,3): {8}, (2,6): {4},
  (3,2): {4}, (3,6): {6}, (3,8): {3}, (4,7): {2}, (5,0): {9},
  (5,1): {7}, (5,3): {6}, (5,8): {5}, (6,1): {3}, (7,0): {8},
  (7,3): {3}, (7,4): {6}, (8,1): {5}, (8,2): {1}, (8,3): {9},
  (8,7): {6}, (8,8): {4} }
</spawn>
<recv [2,3,4,5,7,9,10,11,14,15,16,17,18,23,24,26,27,28,29,31,
       32,33,35,37,38,39,40,41,42,43,45,48,50,51,52,53,55,57,
       58,59,60,61,62,63,65,66,69,70,71,72,73,77,78,79]>
\end{verbatim}
\end{small}

\vspace{0.8em}
\subsection{Worker Trace (Example: Worker 42)}
\noindent\textbf{Worker prompt (first turn).}
\begin{small}
\begin{verbatim}
Your id is: 42.
{ (0,0): {6}, (0,5): {1}, (0,7): {5}, (1,2): {3}, (1,3): {2},
  (2,0): {7}, (2,1): {2}, (2,2): {5}, (2,6): {4}, (3,2): {4},
  (3,6): {6}, (3,8): {3}, (4,7): {2}, (5,0): {9}, (5,1): {7},
  (5,3): {6}, (5,8): {5}, (6,1): {3}, (7,0): {8}, (7,3): {3},
  (7,4): {6}, (8,2): {1}, (8,3): {9}, (8,7): {6}, (8,8): {4} }
\end{verbatim}
\end{small}

\noindent\textbf{Worker response (first turn).}
\begin{small}
\begin{verbatim}
My id is 42 which corresponds to cell (4,5).
The neighboring cells already have the following values: {1, 2, 6}.
The remaining candidates are: {3, 4, 5, 7, 8, 9}.
<send [0]>{ From 42: None }</send><stop>
\end{verbatim}
\end{small}

\vspace{0.6em}
\noindent\textbf{Worker prompt (later fork; after more iterations).}
\begin{small}
\begin{verbatim}
Your id is: 42.
{ (0,0): {6}, (0,2): {9}, (0,3): {4}, (0,4): {3}, (0,5): {1},
  (0,6): {2}, (0,7): {5}, (0,8): {7}, (1,0): {1}, (1,1): {4},
  (1,2): {3}, (1,3): {2}, (2,0): {7}, (2,1): {2}, (2,2): {5},
  (2,3): {8}, (2,4): {9}, (2,5): {6}, (2,6): {4}, (2,7): {3},
  (2,8): {1}, (3,0): {5}, (3,1): {1}, (3,2): {4}, (3,3): {7},
  (3,6): {6}, (3,8): {3}, (4,0): {3}, (4,1): {6}, (4,2): {8},
  (4,7): {2}, (4,8): {9}, ...,
  (8,7): {6}, (8,8): {4} }
\end{verbatim}
\end{small}

\noindent\textbf{Worker response (computes its value and reports).}
\begin{small}
\begin{verbatim}
My id is 42 which corresponds to cell (4,5).
The neighboring cells already have the following values: {1, 2, 3, 4, 6, 7, 8, 9}.
The remaining candidates are: {5}.
<send [0]>{ From 42: 5 }</send><stop>
\end{verbatim}
\end{small}

\vspace{0.8em}
\subsection{Master Trace (Termination)}
\noindent\textbf{Master response (final stop).}
\begin{small}
\begin{verbatim}
<stop>
[[6, 8, 9, 4, 3, 1, 2, 5, 7],
 [1, 4, 3, 2, 5, 7, 8, 9, 6],
 [7, 2, 5, 8, 9, 6, 4, 3, 1],
 [5, 1, 4, 7, 2, 9, 6, 8, 3],
 [3, 6, 8, 1, 4, 5, 7, 2, 9],
 [9, 7, 2, 6, 8, 3, 1, 4, 5],
 [4, 3, 6, 5, 1, 2, 9, 7, 8],
 [8, 9, 7, 3, 6, 4, 5, 1, 2],
 [2, 5, 1, 9, 7, 8, 3, 6, 4]]
</stop>
\end{verbatim}
\end{small}

\newpage
\section{Sample Sudoku Chain-of-Thoughts (Serial)}
\label{app:sample_cot_serial}

This section provides a representative trace for the Serial baseline.
Unlike the message-passing settings, Serial performs iterative constraint propagation
in a single thread (round by round) and terminates with \texttt{<stop>}.
For readability, we truncate long candidate maps.

\vspace{0.5em}
\subsection{Serial Trace (Rounds)}
\noindent\textbf{Serial prompt.}
\begin{small}
\begin{verbatim}
Solve this Sudoku:
{ (0,0): {6}, (0,5): {1}, (0,7): {5}, (1,2): {3}, (1,3): {2},
  (2,0): {7}, (2,1): {2}, (2,2): {5}, (2,6): {4}, (3,2): {4},
  (3,6): {6}, (3,8): {3}, (4,7): {2}, (5,0): {9}, (5,1): {7},
  (5,3): {6}, (5,8): {5}, (6,1): {3}, (7,0): {8}, (7,3): {3},
  (7,4): {6}, (8,2): {1}, (8,3): {9}, (8,7): {6}, (8,8): {4} }
\end{verbatim}
\end{small}

\noindent\textbf{Serial response}
\begin{small}
\begin{verbatim}

Round 1:
For cell (0,1), the neighboring cells have the values {1,2,3,5,6,7}.
The remaining candidates are: {4,8,9}
For cell (0,2), the neighboring cells have the values {1,2,3,4,5,6,7}.
The remaining candidates are: {8,9}
For cell (0,3), the neighboring cells have the values {1,2,3,5,6,9}.
The remaining candidates are: {4,7,8}
...
For cell (8,1), the neighboring cells have the values {1,2,3,4,6,7,8,9}.
The remaining candidates are: {5}
...
The current remaining candidates of the sudoku is :
{ (0,0): {6}, (0,1): {4,8,9}, (0,2): {8,9}, (0,3): {4,7,8}, ... }

...

Round 21:
For cell (5,5), the neighboring cells have the values {1,2,4,5,6,7,8,9}.
The remaining candidates are: {3}
For cell (6,4), the neighboring cells have the values {2,3,4,5,6,7,8,9}.
The remaining candidates are: {1}
For cell (6,6), the neighboring cells have the values {1,2,3,4,5,6,7,8}.
The remaining candidates are: {9}
For cell (8,4), the neighboring cells have the values {1,2,3,4,5,6,8,9}.
The remaining candidates are: {7}
For cell (8,6), the neighboring cells have the values {1,2,4,5,6,7,8,9}.
The remaining candidates are: {3}
The current remaining candidates of the sudoku is :
{ (0,0): {6}, (0,1): {8}, (0,2): {9}, (0,3): {4}, (0,4): {3}, ...,
  (8,6): {3}, (8,7): {6}, (8,8): {4} }
<stop>
[[6, 8, 9, 4, 3, 1, 2, 5, 7],
 [1, 4, 3, 2, 5, 7, 8, 9, 6],
 [7, 2, 5, 8, 9, 6, 4, 3, 1],
 [5, 1, 4, 7, 2, 9, 6, 8, 3],
 [3, 6, 8, 1, 4, 5, 7, 2, 9],
 [9, 7, 2, 6, 8, 3, 1, 4, 5],
 [4, 3, 6, 5, 1, 2, 9, 7, 8],
 [8, 9, 7, 3, 6, 4, 5, 1, 2],
 [2, 5, 1, 9, 7, 8, 3, 6, 4]]
</stop>
\end{verbatim}
\end{small}

\newpage
\section{Sample 3-SAT Chain-of-Thoughts (MPLM)}
\label{app:sample_cot_sat_mplm}

This section provides representative traces from the root agent and a worker
agent on a 5-variable 3-SAT instance (answer: SAT).
The root branches on $x_1$; each child branches on $x_2$; each grandchild is a
leaf solved by unit propagation.
In MPLM, each parent performs a \emph{asynchronous receive}: it reads child results one
at a time and stops as soon as the subproblem is determined.
We display the raw directive syntax including
\texttt{<spawn>}, \texttt{<recv>}, \texttt{<send>}, and \texttt{<stop>}.

\vspace{0.5em}
\subsection{Root Agent Trace}

\noindent\textbf{Root prompt.}
\begin{small}
\begin{verbatim}
( 2 v -1 v 4 ) ^ ( 4 v 5 v -2 ) ^ ( -2 v 3 v 1 ) ^ ( -5 v -2 v 4 ) ^
( 4 v 3 v -1 ) ^ ( -5 v 1 v -4 ) ^ ( 4 v -2 v 3 ) ^ ( -5 v 2 v -3 ) ^
( -5 v 2 v 1 ) ^ ( -5 v -1 v -3 ) ^ ( 4 v -2 v -1 ) ^ ( 4 v 5 v -2 ) ^
( 1 v 3 v 2 ) ^ ( -2 v 4 v 1 ) ^ ( 1 v -5 v -4 ) ^ ( 2 v -1 v -3 ) ^
( -1 v -5 v -4 ) ^ ( -1 v 4 v -5 ) ^ ( -4 v -5 v 3 ) ^ ( -1 v 4 v -2 ) ^
( -2 v 4 v 3 )
\end{verbatim}
\end{small}

\noindent\textbf{Root response (spawns two workers, waits for first result).}
\begin{small}
\begin{verbatim}
Try unit propagation.
No unit clauses.
Branch on x1.
<spawn [0.0]>
( 4 v 5 v -2 ) ^ ( -2 v 3 ) ^ ( -5 v -2 v 4 ) ^ ( -5 v -4 ) ^
( 4 v -2 v 3 ) ^ ( -5 v 2 v -3 ) ^ ( -5 v 2 ) ^ ( 4 v 5 v -2 ) ^
( 3 v 2 ) ^ ( -2 v 4 ) ^ ( -5 v -4 ) ^ ( -4 v -5 v 3 ) ^ ( -2 v 4 v 3 )
</spawn>
<spawn [0.1]>
( 2 v 4 ) ^ ( 4 v 5 v -2 ) ^ ( -5 v -2 v 4 ) ^ ( 4 v 3 ) ^
( 4 v -2 v 3 ) ^ ( -5 v 2 v -3 ) ^ ( -5 v -3 ) ^ ( 4 v -2 ) ^
( 4 v 5 v -2 ) ^ ( 2 v -3 ) ^ ( -5 v -4 ) ^ ( 4 v -5 ) ^
( -4 v -5 v 3 ) ^ ( 4 v -2 ) ^ ( -2 v 4 v 3 )
</spawn>
<recv>
\end{verbatim}
\end{small}

\vspace{0.6em}
\noindent\textbf{Root prompt (after receiving SAT from worker~0.0; worker~0.1 is preempted).}
\begin{small}
\begin{verbatim}
( 2 v -1 v 4 ) ^ ... ^ ( -2 v 4 v 3 )
Try unit propagation.
No unit clauses.
Branch on x1.
<spawn [0.0]>
...
</spawn>
<spawn [0.1]>
...
</spawn>
<recv>
{SAT.}     % only 1 of 2 child results received; worker 0.1 is preempted
</recv>
\end{verbatim}
\end{small}

\noindent\textbf{Root response.}
\begin{small}
\begin{verbatim}
<stop>SAT.</stop>
\end{verbatim}
\end{small}

\vspace{0.8em}
\subsection{Worker Agent Trace (Example: Worker~0.1)}

\noindent\textbf{Worker 0.1 prompt (first turn).}
\begin{small}
\begin{verbatim}
Your id is: 0.1.
( 2 v 4 ) ^ ( 4 v 5 v -2 ) ^ ( -5 v -2 v 4 ) ^ ( 4 v 3 ) ^
( 4 v -2 v 3 ) ^ ( -5 v 2 v -3 ) ^ ( -5 v -3 ) ^ ( 4 v -2 ) ^
( 4 v 5 v -2 ) ^ ( 2 v -3 ) ^ ( -5 v -4 ) ^ ( 4 v -5 ) ^
( -4 v -5 v 3 ) ^ ( 4 v -2 ) ^ ( -2 v 4 v 3 )
\end{verbatim}
\end{small}

\noindent\textbf{Worker 0.1 response (spawns subworkers, waits for first result).}
\begin{small}
\begin{verbatim}
Try unit propagation.
No unit clauses.
Branch on x2.
<spawn [0.1.0]>
( 4 ) ^ ( 4 v 3 ) ^ ( -5 v -3 ) ^ ( -5 v -3 ) ^ ( -3 ) ^ ( -5 v -4 ) ^
( 4 v -5 ) ^ ( -4 v -5 v 3 )
</spawn>
<spawn [0.1.1]>
( 4 v 5 ) ^ ( -5 v 4 ) ^ ( 4 v 3 ) ^ ( 4 v 3 ) ^ ( -5 v -3 ) ^ ( 4 ) ^
( 4 v 5 ) ^ ( -5 v -4 ) ^ ( 4 v -5 ) ^ ( -4 v -5 v 3 ) ^ ( 4 ) ^ ( 4 v 3 )
</spawn>
<recv>
\end{verbatim}
\end{small}

\vspace{0.6em}
\noindent\textbf{Worker 0.1 prompt (after receiving SAT from worker~0.1.0; worker~0.1.1 is preempted).}
\begin{small}
\begin{verbatim}
Your id is: 0.1.
( 2 v 4 ) ^ ... ^ ( -2 v 4 v 3 )
Try unit propagation.
No unit clauses.
Branch on x2.
<spawn [0.1.0]>
...
</spawn>
<spawn [0.1.1]>
...
</spawn>
<recv>
{SAT.}     % only 1 of 2 child results received; worker 0.1.1 is preempted
</recv>
\end{verbatim}
\end{small}

\noindent\textbf{Worker 0.1 response (concludes and reports to root).}
\begin{small}
\begin{verbatim}
Conclusion for this subproblem: SAT.
<send [0]>{SAT.}</send> <stop>
\end{verbatim}
\end{small}

\section{Sample 3-SAT Chain-of-Thoughts (FJ)}
\label{app:sample_cot_sat_fj}

This section shows the same 5-variable instance solved by Fork-Join.
In FJ, the \texttt{<recv [...]>} tag lists \emph{all} expected senders; the
parent blocks until every child result has arrived before aggregating.
For readability we truncate the CNF clauses in the continuation prompts.

\vspace{0.5em}
\subsection{Root Agent Trace}

\noindent\textbf{Root response (spawns two workers, waits for \emph{both}).}
\begin{small}
\begin{verbatim}
Try unit propagation.
No unit clauses.
Branch on x1.
<spawn [0.0]>
( 4 v 5 v -2 ) ^ ( -2 v 3 ) ^ ( -5 v -2 v 4 ) ^ ( -5 v -4 ) ^
( 4 v -2 v 3 ) ^ ( -5 v 2 v -3 ) ^ ( -5 v 2 ) ^ ( 4 v 5 v -2 ) ^
( 3 v 2 ) ^ ( -2 v 4 ) ^ ( -5 v -4 ) ^ ( -4 v -5 v 3 ) ^ ( -2 v 4 v 3 )
</spawn>
<spawn [0.1]>
( 2 v 4 ) ^ ( 4 v 5 v -2 ) ^ ( -5 v -2 v 4 ) ^ ( 4 v 3 ) ^
( 4 v -2 v 3 ) ^ ( -5 v 2 v -3 ) ^ ( -5 v -3 ) ^ ( 4 v -2 ) ^
( 4 v 5 v -2 ) ^ ( 2 v -3 ) ^ ( -5 v -4 ) ^ ( 4 v -5 ) ^
( -4 v -5 v 3 ) ^ ( 4 v -2 ) ^ ( -2 v 4 v 3 )
</spawn>
<recv [0.0, 0.1]>          % waits for ALL children before proceeding
\end{verbatim}
\end{small}

\vspace{0.6em}
\noindent\textbf{Root prompt (after receiving SAT from \emph{both} workers~0.0 and~0.1).}
\begin{small}
\begin{verbatim}
( 2 v -1 v 4 ) ^ ... ^ ( -2 v 4 v 3 )
Try unit propagation.
No unit clauses.
Branch on x1.
<spawn [0.0]>
...
</spawn>
<spawn [0.1]>
...
</spawn>
<recv [0.0, 0.1]>
{SAT.}
{SAT.}     % both child results received before aggregating
</recv>
\end{verbatim}
\end{small}

\noindent\textbf{Root response.}
\begin{small}
\begin{verbatim}
<stop>SAT.</stop>
\end{verbatim}
\end{small}

\vspace{0.8em}
\subsection{Worker Agent Trace (Example: Worker~0.1)}

\noindent\textbf{Worker 0.1 response (spawns subworkers, waits for \emph{both}).}
\begin{small}
\begin{verbatim}
Try unit propagation.
No unit clauses.
Branch on x2.
<spawn [0.1.0]>
( 4 ) ^ ( 4 v 3 ) ^ ( -5 v -3 ) ^ ( -5 v -3 ) ^ ( -3 ) ^ ( -5 v -4 ) ^
( 4 v -5 ) ^ ( -4 v -5 v 3 )
</spawn>
<spawn [0.1.1]>
( 4 v 5 ) ^ ( -5 v 4 ) ^ ( 4 v 3 ) ^ ( 4 v 3 ) ^ ( -5 v -3 ) ^ ( 4 ) ^
( 4 v 5 ) ^ ( -5 v -4 ) ^ ( 4 v -5 ) ^ ( -4 v -5 v 3 ) ^ ( 4 ) ^ ( 4 v 3 )
</spawn>
<recv [0.1.0, 0.1.1]>      % waits for ALL children before proceeding
\end{verbatim}
\end{small}

\vspace{0.6em}
\noindent\textbf{Worker 0.1 prompt (after receiving SAT from \emph{both} workers~0.1.0 and~0.1.1).}
\begin{small}
\begin{verbatim}
Your id is: 0.1.
( 2 v 4 ) ^ ... ^ ( -2 v 4 v 3 )
Try unit propagation.
No unit clauses.
Branch on x2.
<spawn [0.1.0]>
...
</spawn>
<spawn [0.1.1]>
...
</spawn>
<recv [0.1.0, 0.1.1]>
{SAT.}
{SAT.}     % both child results received before aggregating
</recv>
\end{verbatim}
\end{small}

\noindent\textbf{Worker 0.1 response (concludes and reports to root).}
\begin{small}
\begin{verbatim}
Conclusion for this subproblem: SAT.
<send [0]>{SAT.}</send> <stop>
\end{verbatim}
\end{small}

\section{Sample 3-SAT Chain-of-Thoughts (Serial)}
\label{app:sample_cot_sat_serial}

The serial DPLL solver branches depth-first on a single thread with no message
passing.

\vspace{0.5em}
\subsection{Solver Trace}

\noindent\textbf{Prompt.}
\begin{small}
\begin{verbatim}
( 2 v -1 v 4 ) ^ ( 4 v 5 v -2 ) ^ ( -2 v 3 v 1 ) ^ ( -5 v -2 v 4 ) ^
( 4 v 3 v -1 ) ^ ( -5 v 1 v -4 ) ^ ( 4 v -2 v 3 ) ^ ( -5 v 2 v -3 ) ^
( -5 v 2 v 1 ) ^ ( -5 v -1 v -3 ) ^ ( 4 v -2 v -1 ) ^ ( 4 v 5 v -2 ) ^
( 1 v 3 v 2 ) ^ ( -2 v 4 v 1 ) ^ ( 1 v -5 v -4 ) ^ ( 2 v -1 v -3 ) ^
( -1 v -5 v -4 ) ^ ( -1 v 4 v -5 ) ^ ( -4 v -5 v 3 ) ^ ( -1 v 4 v -2 ) ^
( -2 v 4 v 3 )
\end{verbatim}
\end{small}

\noindent\textbf{Response.}
\begin{small}
\begin{verbatim}
Branch on x1.
Branch x1=0
CNF:
( 4 v 5 v -2 ) ^ ( -2 v 3 ) ^ ( -5 v -2 v 4 ) ^ ( -5 v -4 ) ^
( 4 v -2 v 3 ) ^ ( -5 v 2 v -3 ) ^ ( -5 v 2 ) ^ ( 4 v 5 v -2 ) ^
( 3 v 2 ) ^ ( -2 v 4 ) ^ ( -5 v -4 ) ^ ( -4 v -5 v 3 ) ^ ( -2 v 4 v 3 )
  Branch on x2.
  Branch x2=0
  CNF:
  ( -5 v -4 ) ^ ( -5 v -3 ) ^ ( -5 ) ^ ( 3 ) ^ ( -5 v -4 ) ^ ( -4 v -5 v 3 )
    Unit assignments: x5=False, x3=True
    CNF:
    TRUE
    Result: SAT.
<stop>SAT.</stop>
\end{verbatim}
\end{small}

\newpage
\section{Extended Theoretical Results}
\label{sec:extended_theory}

To understand the benefits of our framework, we consider the maximum number of context tokens, and the total number of generated tokens as the metrics of importance, and use it to understand the asymptotic behavior of the MPLM framework and show how it outperforms the FJ framework. Assume that we are trying to solve an $N^2\times N^2$ sudoku. We analyze the CoT of both the parent and children threads for our analysis.

\subsection{Proof of Theorem~\ref{thm:max_context}}
\begin{proof}
We bound the context length required by each type of thread under all execution models.

\paragraph{Serial CoT}
In the serial model, a single LLM thread performs all computation iteratively in context. Therefore, in one iteration, it generates at most $W$ tokens and then an at most $|M|$ token long message for each of the $N$ workers. Therefore, summing over all $T$ iterations, the context consumed is:

\[
C_{\max}^{\mathrm{Serial}} = \bigO{(TN(W + |M|))}.
\]
Since from Assumption \ref{assumption:comm_dominated}, $W = \bigO{(k|M|)}$, we have 

\[
\boxed{C_{\max}^{\mathrm{Serial}} = \bigO{(TNk|M|)}}
\]

\paragraph{Fork-join.}
In the fork-join (FJ) model, worker threads are respawned at every iteration, so their contexts do not persist across iterations. In a single iteration, a worker consumes its prompt and produces local work plus its outgoing message, for a total of at most $W + |M|$ tokens. Thus, the maximum worker context is
\[
C_{\text{worker}}^{\mathrm{FJ}} = \bigO{(W + |M|)}.
\]

The master thread, however, persists across all iterations. In each iteration, it receives one message of size $|M|$ from each of the $N$ ranks, resulting in $O(N|M|)$ new tokens per iteration. Over $T$ iterations, the master accumulates
\[
C_{\text{master}}^{\mathrm{FJ}} = \bigO{(TN|M|)}.
\]

Taking the maximum over all threads yields
\[
C_{\max}^{\mathrm{FJ}} = \bigO{(\max\{\,W + |M|,\; TN|M|\,\})}.
\]
Since, from Assumption \ref{assumption:comm_dominated}, we know that $W = \bigO{(k|M|)}, W+|M| = \bigO{(k|M|)}$. Therefore,

\[
\boxed{C_{\max}^{\mathrm{FJ}} = \bigO{(TN|M|)}}
\]

\paragraph{MPLM-style persistence.}
In the MPLM model, each rank maintains a persistent context across iterations. Fix any rank $i$. In each iteration, rank $i$:
\begin{enumerate}
    \item performs local work contributing $\bigO{(W)}$ tokens;
    \item receive messages of size $|M|$ with at most $k$ neighbors, contributing $\bigO{(k|M|)}$ tokens;
    \item sending a $|M|$ token long message which would require $\bigO{(|M|+k)}$ control tokens for the message and communication bookkeeping.
\end{enumerate}
Thus, the per-iteration context growth is
\[
\bigO{(W + k + |M| + k|M|)} = \bigO{(W + k|M|)},
\]
where lower-order control terms are absorbed.

Since the context persists for $T$ iterations, the total context of rank $i$ is
\[
C_i^{\mathrm{MPLM}} = \bigO{(T(W + k|M|))} = \bigO{(TW + Tk|M|)}.
\]
Taking the maximum over all ranks yields
\[
C_{\max}^{\mathrm{MPLM}} = \bigO{(TW + Tk|M|)},
\]
From Assumption \ref{assumption:comm_dominated}, we know that $W = \bigO({k|M|})$, therefore, 
\[
\boxed{C_{\max}^{\mathrm{MPLM}} = \bigO{(Tk|M|)}},
\]

\end{proof}
\subsection{Sudoku-specific analysis}

In the following section, we derive asymptotic rates for the maximum context required ($C$), total tokens generated ($G$) and the number of sequential tokens ($S$) for the Fork-Join paradigm and MPLMs. 

\label{app:sudoku_theory}
\subsubsection{Analysis of FJ}

\textbf{Parent:} The Chain-of-Thought for the parent is of the following form:
\[
[Initial\;sudoku]\textcolor{blue}{[spawn_1]}[message_1]\textcolor{blue}{[spawn_2]}[message_2]....\textcolor{blue}{[spawn_T]}[message_T]\textcolor{blue}{[Final\;sudoku]}
\]

where T is the number of iterations required to complete the Sudoku using the naked-singles strategy. Note that the text in blue refers to the tokens \textit{generated} by the model. Now, consider the maximum context ($C$) of the parent:

 \[
 C_{parent} = 
 2C_{sudoku} + T\cdot C_{spawn} + T \cdot C_{message}
 \]
 where $C_{sudoku}, C_{spawn} $ and $C_{message}$ are the number of tokens consumed by these parts of the CoT. Since the Sudoku has $N^4$ entries, it consumes $\bigO(N^4)$ tokens. Since spawning requires sending the partially solved Sudoku to all the workers, copying it into the CoT costs $\bigO(N^4)$ tokens. The cost of the message is proportional to the number of live workers which is atmost $\bigO(N^4)$, therefore, $C_{message}$ is also $\bigO(N^4)$.
 \[
 \Rightarrow C_{parent} = \bigO(N^4) + \bigO(T\cdot N^4) + \bigO(T\cdot N^4)
 \]
 \[
 \Rightarrow\boxed{ C_{parent} = \bigO(T\cdot N^4)}
 \]

We also compute the number of tokens generated ($G$) by the model which is:

\[
G_{parent} = T\cdot C_{spawn} + C_{sudoku}
\]
\[
\Rightarrow G_{parent} = \bigO{(T\cdot N^4)} + \bigO{(N^4)}
\]
\[
\Rightarrow \boxed{G_{parent} = \bigO{(T\cdot N^4)}}
\]
 
\textbf{Child:} The CoT for the child is of the following form:
\[
[Current\;sudoku]\textcolor{blue}{[Response]}
\]
where the Response is if the corresponding cell can be filled or not. For the child, 

\[
C_{child} = C_{sudoku} + C_{response}
\]
Computing the response requires listing the possible values left for the cell which is $\bigO(N^2)$. Therefore, $C_{response}$ is $\bigO(N^2)$.
\[
\Rightarrow C_{child} = \bigO(N^4) + \bigO(N^2)
\]
\[
\Rightarrow \boxed{C_{child} = \bigO(N^4)}
\]

Now, consider the total number of tokens generated by \textit{all} the workers. Say that the $i^{th}$ worker runs for $T_i$ iterations, then:
\[
G_{child_i} = \bigO{(T_i\cdot N^2)}
\]
Therefore, combining the tokens from all the workers gives:
\[
\boxed{G_{child} =  \bigO{(T_{total} \cdot N^2)} }
\]
where $T_{total} = \sum_{i=1}^{N^4}{T_i}$

Using the above analysis, its evident that in FJ, the context is dominated by the parent which is $\bigO(T\cdot N^4)$. Therefore, we obtain:

\begin{equation}
\label{eq:fj_context}
\boxed{C^{FJ} = \bigO{(T\cdot N^4)}}
\end{equation}

\begin{equation}
\label{eq:fj_total}
\boxed{G^{FJ} = \bigO{(T\cdot N^4+ T_{total}\cdot N^2)} }    
\end{equation}

\subsubsection{Analysis of MPLM}

\textbf{Parent:} The CoT for the parent in MPLM is of the following form:
\[
[Initial\;sudoku]\textcolor{blue}{[spawn]}[message]\textcolor{blue}{[Final\;sudoku]}
\]
Therefore, 
\[
C_{parent} = 2\cdot C_{sudoku} + C_{spawn} + C_{message}
\]
The cost of the Sudoku is still $\bigO(N^4)$. The cost of spawning the workers and message is also $\bigO(N^4)$.
\[
\Rightarrow C_{parent} = 2\cdot \bigO(N^4) + \bigO(N^4) + \bigO(N^4)
\]
\[
\Rightarrow \boxed{C_{parent} = \bigO(N^4)}
\]

We also compute the total tokens generated by the master which is 
\[
G_{parent} = C_{spawn}+C_{sudoku}
\]
\[
\Rightarrow \boxed{G_{parent} = \bigO{(N^4)}}
\]

\textbf{Child:} Since we are interested in the maximum context consumed, consider the child which runs for the maximum number of iterations. The CoT for the child in MPLM is of the form:

\[
[Initial\;sudoku]\textcolor{blue}{[send_1]}[message_2]\textcolor{blue}{[send_2]}[message_2]...\textcolor{blue}{[send_T]}
\]
Therefore, 
\[
C_{child} = C_{sudoku} + (T-1)\cdot C_{message} + T\cdot C_{send}
\]
\textbf{The cost of the message in MPLM is $\bigO(N^2)$ }since each worker only receives messages from its neighbors, that is, workers in its rows, columns, and grid. Similary, the cost of sending a message is also $\bigO(N^2)$. Therefore, 
\[
C_{child} = \bigO(N^4) + \bigO((T-1)\cdot N^2) + \bigO(T\cdot N^2)
\]
\[
\Rightarrow \boxed{C_{child} = \bigO(N^4 + T\cdot N^2)}
\]
Now, we calculate the number of tokens generated by the $i^{th}$ worker. 
\[
G_{child_i} = T_i \cdot C_{send}
\]
\[
\Rightarrow G_{child_i} = \bigO{(T_i\cdot N^2)}
\]
Therefore, summming over all children, 
\[
\Rightarrow \boxed{G_{child} = \bigO{(T_{total} \cdot N^2)}}
\]
where $T_{total} = \sum_{i=1}^{N^4}{T_i}$.

Therefore, for MPLM we have
\begin{equation}
\label{eq:MPLM_context}
\boxed{C^{MPLM} = \bigO{(N^4 + T\cdot N^2)}}
\end{equation}

\begin{equation}    
\label{eq:MPLM_total}
\boxed{G^{MPLM} = \bigO{(N^4+T_{total}\cdot N^2}}
\end{equation}

Using Equations \Crefrange{eq:fj_context}{eq:MPLM_total} we observe that MPLM is better both in terms of maximum context required and the total number of tokens generated.



\subsubsection{Sequential tokens}

Number of sequential tokens represents the maximum causally dependent tokens. We measure sequential tokens as the sum, over iterations, of the maximum number of
newly generated tokens among all agents in that iteration. Let $G_{i,t}$ denote
the number of generated tokens produced by agent $i$ at iteration $t$ (including the master and workers),

and let $T$ be the number of iterations required to complete the Sudoku using the naked-singles strategy. We define
\[
S \;=\; \sum_{t=0}^{T} \max_i G_{i,t}.
\]

\paragraph{FJ}
In FJ, the master participates every iteration, it generates a \texttt{spawn} containing the
(partially solved) Sudoku and aggregates worker messages.
Per iteration, the master generates $\bigO(N^4)$ tokens for \texttt{spawn},
and the workers computes the response by listing the possible values left for the cell, which is $\bigO(N^2)$ tokens. So
\[
\max_i G_{i,t} =
\begin{cases}
\bigO(N^4+N^2), & 0 \le t \le T-1,\\
\bigO(N^4), & t = T.
\end{cases}
\]
Thus,
\[
S_{\text{FJ}}
= \bigO(T\cdot(N^4+N^2)) + \bigO(N^4)
\]
\begin{equation}
\label{eq:fj_seq}
\boxed{S_{\text{FJ}} = \bigO\!\left(TN^4\right)}.    
\end{equation}

\paragraph{MPLM.}
In MPLM, the master generates a single initial \texttt{spawn} at $t=0$,
workers exchange only local neighbor information during intermediate iterations,
and the master outputs the final solved Sudoku at termination. So,

\[
\max_i G_{i,t} =
\begin{cases}
\bigO(N^4), & t = 0,\\
\bigO(N^2), & 1 \le t \le T-1,\\
\bigO(N^4), & t = T.
\end{cases}
\]
Therefore if $S$ denotes number of sequential tokens, then:
\[
S_{\text{MPLM}}
= \bigO(N^4) + (T-1)\bigO(N^2) + \bigO(N^4)
\]
\begin{equation}   
\label{eq:mplm_seq}
\Rightarrow \boxed{S_{\text{MPLM}} = \bigO(N^4 + T N^2)}
\end{equation}

From \Crefrange{eq:fj_seq}{eq:mplm_seq}, we observe that for the number of sequential tokens as well, MPLMs are asymptotically better.

\section{Respawning Improves Efficiency and Enables Unbounded Execution}
\label{app:respawn}

In our Sudoku setting, a worker is responsible for a single cell and its entire logical state can be summarized by the set of remaining feasible values for that cell. Therefore, after any number of iterations, the state of the worker can be summarized concisely and the worker can be respawned with a prompt containing the set of remaining possible values.

This mechanism has two key effects. First, it prevents unbounded growth of per-worker context: as shown in Figure \ref{fig:input_tokens}, respawning keeps the maximum worker context substantially lower across problem sizes, with the gap widening as Sudoku size increases. Second, it reduces per-iteration input tokens by eliminating historical reasoning that is no longer necessary, leading to the characteristic `sawtooth' pattern where input tokens reset after each respawn.

The reduced size of the prompt also has an effect on the latency of inference since per-token cost increases with the size of the prompt. To test this, we train models for $N=4$ and $N=5$ without and with respawning every 10 iterations and measure the effects on latency.  We observe that respawning yields 11.3\% and 36.0\% reductions in latency for $N=4$ and $N=5$, respectively.

\begin{figure}[t]
    \centering
    \includegraphics[width=0.8\columnwidth]{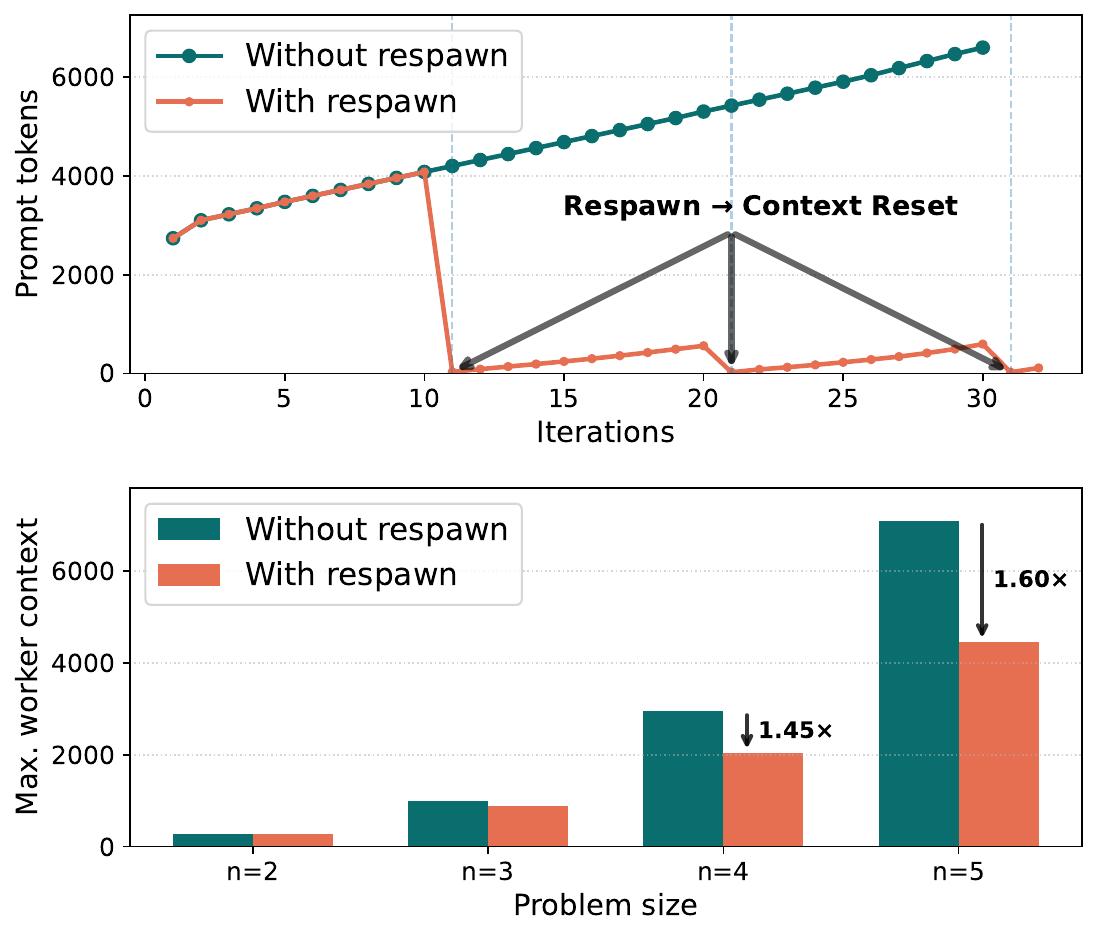}
    \caption{Respawning at a regular frequency (10 iterations in our setting) effectively reduces maximum worker context.}
    \vspace{-1em}
    \label{fig:input_tokens}
\end{figure}

\section{System Profiling and Message-Token Overhead}
\label{app:profiling}

We profile MPLM on representative Sudoku and SAT instances to understand where runtime is spent and how much communication contributes to generation cost. Across all problem sizes, the dominant cost is model execution rather than controller logic. For Sudoku, vLLM accounts for $98.2\%$, $99.1\%$, and $99.6\%$ of wall-clock time on $9\times 9$, $16\times 16$, and $25\times 25$ instances, respectively, while controller overhead remains below $2\%$ and decreases as the problem size grows. This indicates that the message-passing controller adds negligible overhead relative to LLM inference, even as the number of rounds and requests increases substantially. GPU utilization increases with Sudoku size, rising from $68.9\%$ on $9\times 9$ to $90.2\%$ on $25\times 25$, suggesting that larger MPLM workloads better saturate the inference backend. 

\begin{table}[!t]
\centering
\small
\begin{tabular}{lccc}
\toprule
\textbf{Metric} & \textbf{$n=3$ $(9\times 9)$} & \textbf{$n=4$ $(16\times 16)$} & \textbf{$n=5$ $(25\times 25)$} \\
\midrule
Total wall-clock time & 16.99s & 112.898s & 999.014s \\
vLLM execution time & 16.68s & 111.832s & 995.208s \\
vLLM execution ratio & 98.2\% & 99.1\% & 99.6\% \\
Controller overhead & 0.303s & 1.066s & 3.806s \\
Controller overhead ratio & 1.8\% & 0.9\% & 0.4\% \\
Number of rounds & 24 & 46 & 102 \\
Total vLLM requests & 420 & 1,682 & 5,609 \\
Average requests per round & 17.5 & 36.6 & 55.0 \\
\bottomrule
\end{tabular}
\caption{Runtime profiling for MPLM on representative Sudoku instances.}
\label{tab:sudoku-runtime-profiling}
\end{table}

\begin{table}[!t]
\centering
\small
\begin{tabular}{lccc}
\toprule
\textbf{Metric} & \textbf{$n=3$ $(9\times 9)$} & \textbf{$n=4$ $(16\times 16)$} & \textbf{$n=5$ $(25\times 25)$} \\
\midrule
Mean GPU utilization & 68.9\% & 82.5\% & 90.2\% \\
Maximum GPU utilization & 82\% & 93\% & 100\% \\
Mean KV-cache usage & 0.012\% & 0.124\% & 0.443\% \\
Maximum KV-cache usage & 0.030\% & 0.579\% & 1.000\% \\
Minimum KV-cache usage & 0.000\% & 0.000\% & 0.000\% \\
Prefix-cache hit rate & 75.7\% & 84.0\% & 41.7\% \\
\bottomrule
\end{tabular}
\caption{GPU and KV-cache profiling for MPLM on representative Sudoku instances.}
\label{tab:sudoku-gpu-kv-profiling}
\end{table}
\begin{table}[!t]
\centering
\small
\begin{tabular}{lcc}
\toprule
\textbf{Metric} & \textbf{SAT} & \textbf{UNSAT} \\
\midrule
Total wall-clock time & 41.757s & 54.686s \\
vLLM execution time & 41.726s & 54.676s \\
vLLM execution ratio & 99.9\% & 100.0\% \\
Controller overhead & 0.031s & 0.010s \\
Controller overhead ratio & 0.1\% & 0.0\% \\
Number of rounds & 18 & 45 \\
Total vLLM requests & 45 & 72 \\
Average requests per round & 2.39 & 1.71 \\
\bottomrule
\end{tabular}
\caption{Runtime profiling for MPLM on representative SAT instances.}
\label{tab:sat-runtime-profiling}
\end{table}

\begin{table}[!t]
\centering
\small
\begin{tabular}{lcc}
\toprule
\textbf{Metric} & \textbf{SAT} & \textbf{UNSAT} \\
\midrule
Mean GPU utilization & 83.6\% & 83.5\% \\
Maximum GPU utilization & 90\% & 87\% \\
Mean KV-cache usage & 0.023\% & 0.015\% \\
Maximum KV-cache usage & 0.056\% & 0.028\% \\
Minimum KV-cache usage & 0.000\% & 0.000\% \\
Prefix-cache hit rate & 50.8\% & 50.6\% \\
\bottomrule
\end{tabular}
\caption{GPU and KV-cache profiling for MPLM on representative SAT instances.}
\label{tab:sat-gpu-kv-profiling}
\end{table}

\begin{table}[!t]
\centering
\small
\begin{tabular}{llccc}
\toprule
\textbf{Size} & \textbf{Thread Type} & \textbf{CoT Tokens} & \textbf{Directive Tokens} & \textbf{Message Payload Tokens} \\
\midrule
$n=3$ $(9\times 9)$ & Master & 0\% & 53.4\% & 46.6\% \\
$n=3$ $(9\times 9)$ & Child  & 24.2\% & 67.8\% & 8.0\% \\
$n=4$ $(16\times 16)$ & Master & 0\% & 50.1\% & 49.9\% \\
$n=4$ $(16\times 16)$ & Child  & 71.7\% & 26.7\% & 1.6\% \\
$n=5$ $(25\times 25)$ & Master & 0\% & 48.9\% & 51.1\% \\
$n=5$ $(25\times 25)$ & Child  & 76.0\% & 23.1\% & 0.9\% \\
\bottomrule
\end{tabular}
\caption{Output token composition for master and child threads on Sudoku.}
\label{tab:message-token-overhead}
\end{table}

We observe a similar pattern on SAT. For both SAT and UNSAT instances, nearly all wall-clock time is spent inside vLLM, with controller overhead at $0.1\%$ or lower. The SAT instances require fewer parallel requests per round than Sudoku, reflecting the smaller number of active reasoning threads in these examples, but still maintain high GPU utilization around $83.5\%$.

Finally, we measure the composition of generated output tokens on Sudoku to quantify message-token overhead. Master threads produce no chain-of-thought tokens; their outputs consist only of directives and message payloads, as expected from their role as coordinators. Child threads, in contrast, spend most of their output budget on local reasoning for larger puzzles. In particular, message payload tokens account for only $8.0\%$, $1.6\%$, and $0.9\%$ of child-thread output on $9\times 9$, $16\times 16$, and $25\times 25$ Sudoku, respectively. Thus, while MPLM introduces explicit communication, the token overhead of messages is small for worker threads at larger problem sizes, where most generated tokens are devoted to local reasoning.

\pagebreak

\section{Synchronization Schemes Under Variable Latencies Across Workers}
\label{sec:sync_under_variable_latency}

\begin{figure}[t!]
    \centering
    \includegraphics[width=0.85\textwidth]{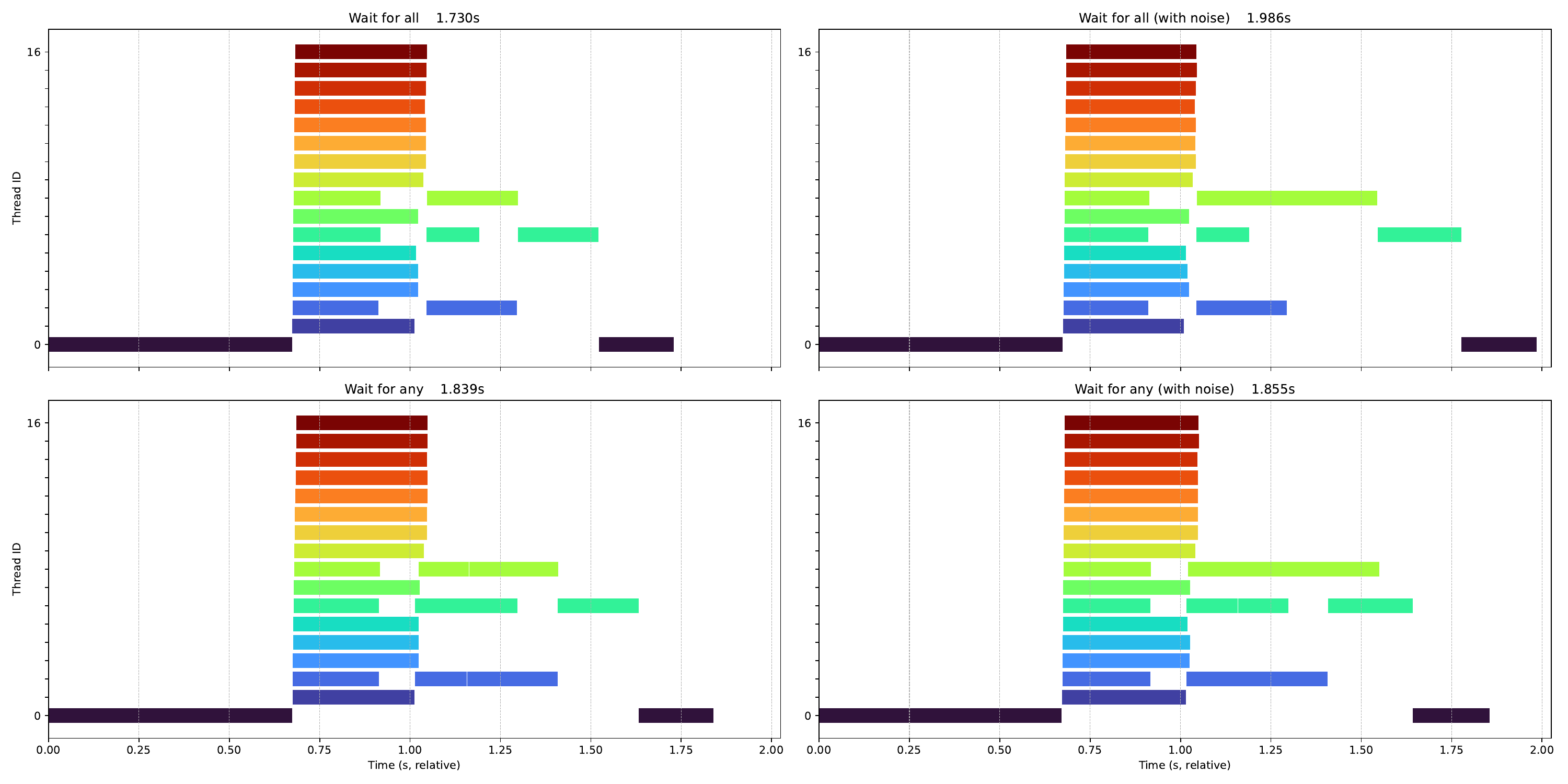}
\caption{In the example above, we note the latency of individual workers in the settings with added noise (right) v.s. no noise (left). The y-axis denotes the ID of the worker and x-axis denotes time in seconds. The start and end of a bar denotes the time frame for which a worker was active during inference.}
    \label{fig:noise_test}
\end{figure}

As we discuss in Section \ref{sec:sync}, there are multiple ways in which ways in which each thread is allowed to invoke the inference engine and \emph{when} messages are delivered. We discuss two implementations: \textbf{wait for all} and \textbf{wait for any}. Technically, they can have different primitives but instead we use the same tokens and change their behavior to understand their individual behaviors without having to re-train the model. 

On our Sudoku instantiation, each worker executes a highly structured and largely deterministic computation. But in practice, worker latencies can vary substantially across iterations and across threads, primarily due to heterogeneous input and output token lengths and system-level contention and queueing on shared GPU resources such as dynamic batching. To simulate this scenario while keeping the underlying reasoning behavior unchanged, we inject an additional post-run delay into each thread to amplify cross-worker and cross-iteration variance. For worker $i$ at iteration $t$, we extend its runtime by
\[
\Delta_{i,t} \;=\; r_{i,t}\cdot u_{i,t}, \qquad u_{i,t}\sim \text{Uniform}(0,1),
\]
where $r_{i,t}$ is the measured thread runtime excluding the injected delay and $u_{i,t}$ is a deterministic pseudo-random draw fixed per $(i,t)$ for reproducibility. This construction increases the latency dispersion within and across iterations without changing the logical content of the messages. We perform an experimental study on the $4\times 4$ Sudoku puzzles where we use the same noise (by fixing the random seed) and use the two synchronization schemes on a dataset of 100 puzzles and then compare the final latencies.

\textbf{Results}: We observe that out of the 100 puzzles, in 76 Sudoku puzzles, \textbf{wait-for-any} is faster because it doesn't have to wait for the slowest worker in its neighbors. However, this can be worse in cases where actually waiting for the slowest worker can be beneficial since more information is gained per iteration which would result in fewer iterations to be used for solving the overall puzzle. Below, we discuss an example where overall latency is lower because of the wait-for-all synchronization scheme. 

\begin{wrapfigure}{r}{0.3\textwidth}
  \centering
  \vspace{-1.5em}
  \includegraphics[width=\linewidth]{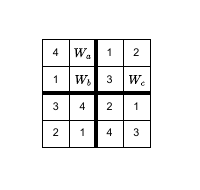}
  \vspace{-4em}
  \caption{An example Sudoku where \textbf{wait-for-any} can be faster than \textbf{wait-for-all} sychronization.}
  \label{fig:wrap}
\end{wrapfigure}
Consider the Sudoku drawn here where there are only 3 cells to be filled denoted by $W_a, W_b$ and $W_c$. Note that $W_a$ and $W_c$  can be resolved after 2 iterations. However, $W_b$ can only be resolved after either $W_a$ or $W_c$ is resolved. Therefore, $W_b$ is critically dependent on $W_a$ and $W_c$ but not both. Even if one of them finishes, $W_b$ can finish its task. Therefore, consider that \textcolor{red}{$W_c$ is much slower}. 

Then, in the \textbf{wait-for-all} setting, $W_b$ will have to wait for $W_c$. Therefore, latency will be increased due to $W_c$'s slowness. 

However, in the \textbf{wait-for-any} setting, $W_b$ can start when $W_a$ is finished. Therefore, there is a possibility of a speedup. 

This result is seen in right part of Figure \ref{fig:noise_test} where \textbf{wait-for-any} is actually faster than \textbf{wait-for-all}.


\section{Long Context Question Answering}
\label{app:long_qa}

\label{sec:long_qa_example}
\begin{figure}[t!]
    \centering
    \includegraphics[width=0.85\textwidth]{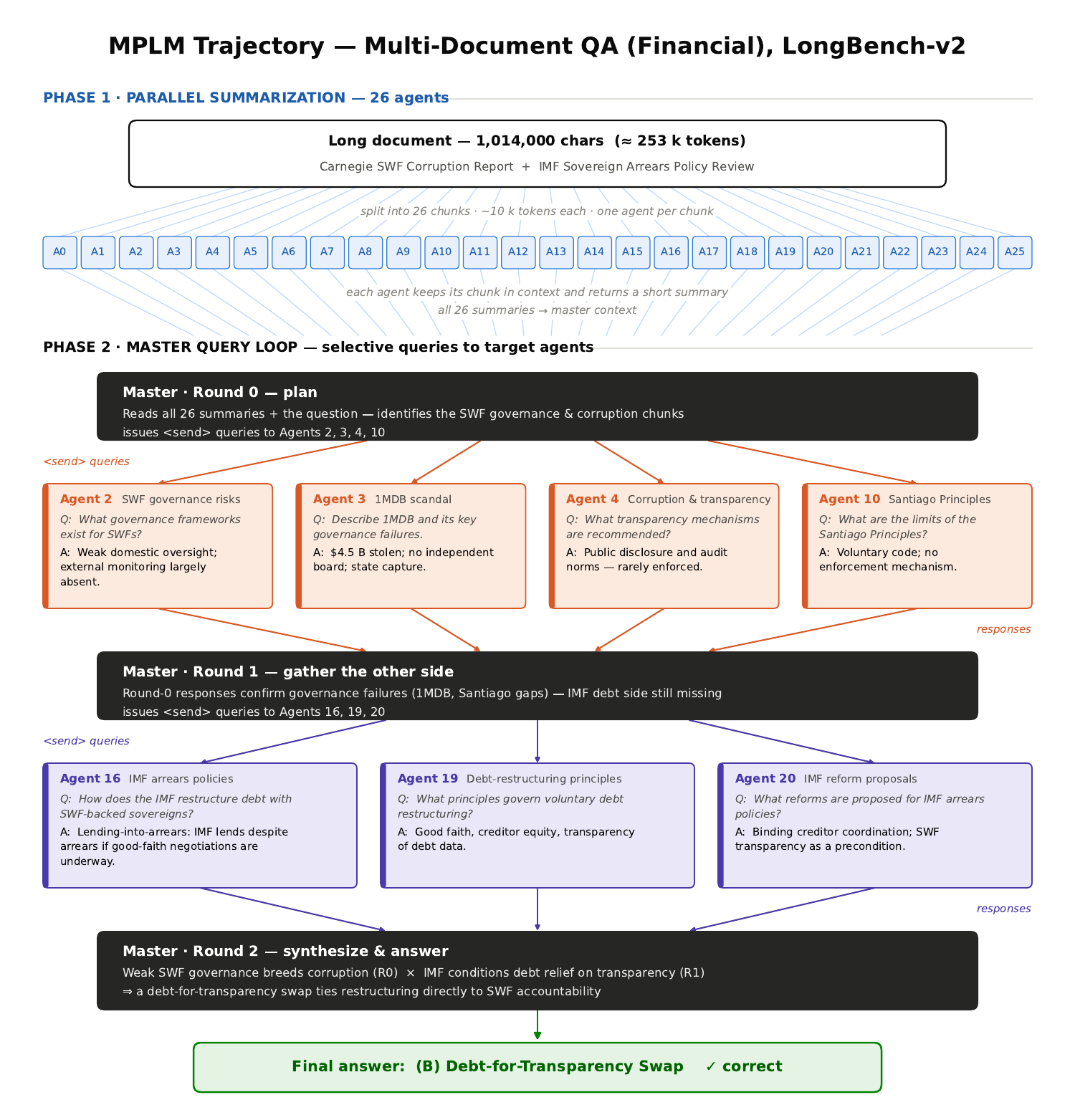}
\caption{In the example above, we show message passing can be used to selective route information across different parallel threads. This figure is an illustrative example from the LongBench-v2 dataset using the Qwen3-30B-A3B model.}
    \label{fig:mplm_example}
    \vspace{-0.5em}
\end{figure}

MPLMs decompose long-context reasoning into two stages: parallel local processing followed by targeted message routing. Given a long input document or collection of documents, the main thread first partitions the context into chunks and spawns a set of parallel worker threads, each assigned to process one chunk. Each worker reads its local chunk, constructs a compact summary, and remains active as a persistent worker associated with that chunk. The main thread receives these summaries, which provide a global index over the document without requiring the full context to be repeatedly copied into every subsequent reasoning step.

After this initial parallel pass, the main thread performs an iterative query-routing loop. Rather than broadcasting every follow-up question to all workers, the main thread uses the chunk summaries and the current reasoning state to identify which persistent workers are relevant. It then sends targeted messages only to those workers, asking them to recover local evidence, answer subquestions, or verify intermediate claims. Since each worker retains its chunk-specific state, the main thread can route new information back to the same worker without retransmitting the original chunk. This enables the system to combine global coordination with local memory: the main thread decides what information is needed next, while persistent workers provide grounded responses from their assigned portions of the context. We report the full results on LongBench-v2 in Table~\ref{tab:fulllongbenchv2}. An example can be seen in Figure \ref{fig:mplm_example}

\begin{table*}[t!]
\centering
\scriptsize
\setlength{\tabcolsep}{3pt}
\resizebox{\textwidth}{!}{
\begin{tabular}{lcccccccc}
\toprule

\multirow{2}{*}{\textbf{Partition}} 
& \multicolumn{4}{c}{\textbf{Qwen3-30B-A3B}} 
& \multicolumn{4}{c}{\textbf{Qwen3.6-35B-A3B}} \\
\cmidrule(lr){2-5} \cmidrule(lr){6-9}
& \makecell{MPLM \\ Accuracy} 
& \makecell{MPLM \\ Latency} 
& \makecell{RLM \\ Accuracy} 
& \makecell{RLM \\ Latency}
& \makecell{MPLM \\ Accuracy}
& \makecell{MPLM \\ Latency}
& \makecell{RLM \\ Accuracy} 
& \makecell{RLM \\ Latency} \\

\midrule

\multicolumn{9}{l}{\textit{I. Single-Document QA}} \\
 Academic & \textbf{39.8\%} & \textbf{52.4s} & 37.5\% & 110.7s & \textbf{47.7\%} & \textbf{88.9s} & \textbf{47.7\%} & 163.0s \\
Literary & \textbf{40.8\%} & \textbf{55.6s} & 32.5\% & 108.2s & \textbf{40.0\%} & \textbf{112.2s} & \textbf{40.0\%} & 210.7s \\
 Legal & \textbf{36.8\%} & \textbf{46.3s} & 34.2\% & 89.8s & 47.4\% & 85.9s & \textbf{68.4\%} & \textbf{84.3s} \\
 Financial & \textbf{63.6\%} & \textbf{57.9s} & 38.6\% & 141.3s & 63.6\% & \textbf{120.8s} & \textbf{68.2\%} & 232.3s \\
 Governmental & \textbf{33.3\%} & \textbf{57.3s} & \textbf{33.3\%} & 119.2s & \textbf{38.9\%} & \textbf{118.9s} & \textbf{38.9\%} & 248.7s \\
Detective & 13.6\% & \textbf{62.9s} & \textbf{46.6\%} & 130.0s & 27.3\% & \textbf{122.2s} & \textbf{36.4\%} & 279.5s \\
Event Ordering & \textbf{35.0\%} & \textbf{82.3s} & 27.5\% & 129.2s & \textbf{75.0\%} & \textbf{123.9s} & 35.0\% & 497.6s \\

\midrule
\multicolumn{9}{l}{\textit{II. Multi-Document QA}} \\
 Academic & \textbf{36.0\%} & \textbf{60.0s} & 17.5\% & 107.1s & \textbf{48.0\%} & \textbf{100.7s} & 42.0\% & 259.6s \\
 Legal & \textbf{30.4\%} & \textbf{61.5s} & 21.4\% & 75.8s & \textbf{50.0\%} & \textbf{95.9s} & 21.4\% & 180.4s \\
 Financial & \textbf{48.3\%} & \textbf{64.1s} & 28.3\% & 114.6s & \textbf{53.3\%} & \textbf{123.1s} & 33.3\% & 279.4s \\
 Governmental & \textbf{35.9\%} & \textbf{57.0s} & 13.0\% & 103.3s & \textbf{34.8\%} & \textbf{121.0s} & \textbf{34.8\%} & 283.5s \\
Multi-News & 26.1\% & \textbf{53.0s} & \textbf{34.8\%} & 112.5s & 52.2\% & \textbf{88.0s} & \textbf{56.5\%} & 139.6s \\

\midrule
\multicolumn{9}{l}{\textit{III. Long In-context Learning}} \\
User Guide QA & \textbf{34.4\%} & \textbf{59.2s} & 25.0\% & 105.4s & \textbf{55.0\%} & \textbf{96.1s} & 40.0\% & 258.7s \\
New Language Translation & \textbf{27.5\%} & \textbf{78.3s} & 22.5\% & 104.9s & \textbf{55.0\%} & \textbf{115.6s} & 50.0\% & 332.9s \\
Many-Shot Learning & \textbf{42.9\%} & \textbf{76.1s} & 33.3\% & 79.2s & \textbf{33.3\%} & \textbf{162.2s} & 23.8\% & 453.6s \\

\midrule
\multicolumn{9}{l}{\textit{IV. Long-dialogue History Understanding}} \\
Agent History QA & \textbf{40.0\%} & \textbf{74.4s} & 28.7\% & 98.0s & \textbf{75.0\%} & 90.0s & 60.0\% & \textbf{70.6s} \\
Dialogue History QA & \textbf{63.2\%} & \textbf{45.0s} & 35.5\% & 87.0s & 47.4\% & 98.4s & \textbf{57.9\%} & \textbf{75.0s} \\

\midrule
\multicolumn{9}{l}{\textit{V. Code Repository Understanding}} \\
Code Repo QA & \textbf{35.0\%} & \textbf{65.5s} & 32.5\% & 93.3s & 28.0\% & \textbf{56.7s} & \textbf{52.0\%} & 187.8s \\

\midrule
\multicolumn{9}{l}{\textit{VI. Long Structured Data Understanding}} \\
Table QA & 37.5\% & \textbf{66.7s} & \textbf{41.7\%} & 99.0s & 27.8\% & \textbf{95.9s} & \textbf{50.0\%} & 96.9s \\
Knowledge Graph Reasoning & \textbf{48.3\%} & \textbf{61.2s} & 11.7\% & 100.8s & 53.3\% & \textbf{112.6s} & \textbf{86.7\%} & 112.9s \\

\midrule
\textbf{Average} 
& \textbf{37.8\%} & \textbf{61.3s} & 29.7\% & 105.7s
& 46.5\% & \textbf{102.2s} & \textbf{46.7\%} & 223.5s \\
\bottomrule
\end{tabular}
}
\caption{Full LongBench-v2 results. Qwen3 evaluations were performed on a single node of 4 GH200 GPUs and the Qwen3.6 evaluations were done on a single node of 4 L40 GPUs.}
\label{tab:fulllongbenchv2}
\vspace{-0.5em}
\end{table*}

\end{document}